\crefname{equation}{Eq.}{Eqs.}
\crefname{figure}{Fig.}{Figs.}
\crefname{assumption}{Assumption}{Assumptions}
\title{Can We Theoretically Quantify the Impacts of Local Updates on the Generalization Performance of Federated Learning?}
\date{Auguest, 2024}
\author{%
    Peizhong Ju \\
    The Ohio State University \\
    \texttt{peizhong.ju@uky.edu} \\
    \and
    Haibo Yang \\
    Rochester Institute of Technology \\
    \texttt{hbycis@rit.edu} \\
    \and
    Jia Liu \\
    The Ohio State University \\
    \texttt{liu.1736@osu.edu} \\
    \and
    Yingbin Liang \\
    The Ohio State University \\
    \texttt{liang.889@osu.edu} \\
    \and
    Ness Shroff \\
    The Ohio State University \\
    \texttt{shroff.11@osu.edu}
}
\newtheorem{theorem}{Theorem}
\newtheorem{lemma}{Lemma}
\newtheorem{proposition}{Proposition}
\newtheorem{assumption}{Assumption}
\crefname{equation}{Eq.}{Eqs.}
\crefname{assumption}{Assumption}{Assumptions}
\newcommand{\T}{^\top}
\newcommand{\defeq}{\coloneqq}
\newcommand{\norm}[1]{\left\|#1\right\|}
\DeclareMathOperator*{\argmin}{arg\,min}
\newcommand{\E}{\mathop{\mathbb{E}}}
\newcommand{\abs}[1]{\left|#1\right|}
\newcommand{\myMa}[1]{\begin{bmatrix}#1\end{bmatrix}}
\newcommand\numberthis{\addtocounter{equation}{1}\tag{\theequation}}
\DeclareMathOperator{\diag}{diag}
\newcommand{\nij}[2]{n_{(#1),#2}}
\newcommand{\sigmaij}[2]{\sigma_{(#1),#2}}
\newcommand{\XX}[2]{\mathbf{X}_{(#1),#2}}
\newcommand{\yy}[2]{\bm{y}_{(#1),#2}}
\newcommand{\ee}[2]{\bm{\epsilon}_{(#1),#2}}
\newcommand{\vw}{\bm{w}}
\newcommand{\vy}{\bm{y}}
\newcommand{\mX}{\mathbf{X}}
\newcommand{\what}{\hat{\vw}}
\newcommand{\wInit}{\hat{\vw}_0}
\newcommand{\vwTrue}{\vw^*}
\newcommand{\wijTrue}[2]{\vw_{(#1),#2}}
\newcommand{\wij}[2]{\hat{\vw}_{(#1),#2}^{K=\infty}}
\newcommand{\wavg}[1]{\hat{\vw}_{\text{avg},#1}^{K=\infty}}
\newcommand{\Pij}[2]{\mathbf{P}_{(#1),#2}}
\newcommand{\deltawt}[1]{{\bm{\Delta}_{#1}^{K=\infty}}}
\newcommand{\gammait}[2]{\bm{\gamma}_{(#1),#2}}
\newcommand{\wavgGeneral}[1]{\hat{\vw}_{\text{avg},#1}}
\newcommand{\wijGeneral}[2]{\hat{\vw}_{(#1),#2}}
\newcommand{\deltatGeneral}[1]{\bm{\Delta}_{#1}}
\newcommand{\gammaSquareAvg}{\overline{\norm{\bm{\gamma}}^2}}
\newcommand{\wwij}[2]{\hat{\vw}_{(#1),#2}^{K=1}}
\newcommand{\deltawwt}[1]{{\bm{\Delta}_{#1}^{K=1}}}
\newcommand{\wwavg}[1]{\hat{\vw}_{\text{avg},#1}^{K=1}}
\newcommand{\deltat}[1]{{\bm{\Delta}_{#1}^{K<\infty}}}
\newcommand{\wijj}[2]{\hat{\vw}_{(#1),#2}}
\newcommand{\wk}[1]{\wijj{i}{t,#1}}
\newcommand{\Xk}{\XX{i}{t,k}}
\newcommand{\yk}{\yy{i}{t,k}}
\newcommand{\ek}{\ee{i}{t,k}}
\newcommand{\wavgg}[1]{\hat{\vw}_{\text{avg},#1}^{K<\infty}}
\newcommand{\stepsizeit}[2]{\alpha_{(#1),#2}}
\newcommand{\step}{\stepsizeit{i}{t}}
\newcommand{\stepj}{\stepsizeit{j}{t}}
\newcommand{\nn}{\tilde{n}_{(i),t}}
\newcommand{\mySum}{\frac{1}{\sum_{i\in [m]}\nij{i}{t}}\sum_{i\in [m]}}
\newcommand{\mySumP}[1]{\frac{\sum_{i\in [m]}#1}{\sum_{i\in [m]}\nij{i}{t}}}
\newcommand{\mySumSquare}{\frac{1}{(\sum_{i\in [m]}\nij{i}{t})^2}\sum_{i\in [m]}}
\newcommand{\sumNeq}{\sum_{j\in [m]\setminus\{i\}}}
\newcommand{\mySumSquareP}[1]{\frac{#1}{(\sum_{i\in [m]}\nij{i}{t})^2}}
\newcommand{\mySumSquarePP}[1]{\frac{\sum_{i\in [m]}#1}{(\sum_{i\in [m]}\nij{i}{t})^2}}
\newcommand{\iMatrix}[1]{\mathbf{I}_{#1}}
\newcommand{\F}{\mathcal{F}}
\newcommand{\seq}[2]{\text{seq}_{#1}\left(#2\right)}
\renewcommand{\citet}{\cite}
\begin{document}

\maketitle

\renewcommand{\thefootnote}{\fnsymbol{footnote}} 
\footnotetext[1]{This work is published in MobiHoc 2024.} 

\begin{abstract}

Federated Learning (FL) has gained significant popularity due to its effectiveness in training machine learning models across diverse sites without requiring direct data sharing.
While various algorithms along with their optimization analyses have shown that FL with local updates is a communication-efficient distributed learning framework, the generalization performance of FL with local updates has received comparatively less attention.
This lack of investigation can be attributed to the complex interplay between data heterogeneity and infrequent communication due to the local updates within the FL framework.
This motivates us to investigate a fundamental question in FL: {\em Can we quantify the impact of data heterogeneity and local updates on the generalization performance for FL as the learning process evolves?}
To this end, we conduct a comprehensive theoretical study of FL's generalization performance using a linear model as the first step, where the data heterogeneity is considered for both the stationary and online/non-stationary cases. 
By providing closed-form expressions of the model error, we rigorously quantify the impact of the number of the local updates (denoted as $K$) under three settings ($K=1$, $K<\infty$, and $K=\infty$) and show how the generalization performance evolves with the number of rounds $t$. 
Our investigation also provides a comprehensive understanding of how different configurations (including the number of model parameters $p$ and the number of training samples $n$) contribute to the overall generalization performance, thus shedding new insights (such as benign overfitting) for implementing FL over networks.

\end{abstract}
\section{Introduction}
Federated Learning (FL) has recently emerged as a prominent paradigm in the realm of distributed learning, facilitating the collaborative training of machine learning models among clients under the orchestration of a central server. 
By offering privacy preservation, scalability, and collaborative intelligence, FL holds great potential to revolutionize industries in healthcare, finance, IoT, among others~\cite{yang2019flconcept,xu2021federated,long2020federated,khan2021federated}.
In FL, the federated averaging (FedAvg) algorithm~\cite{mcmahan2017communication} and its variants have become the prevailing approach.
FedAvg leverages local computation at each client and employs a centralized parameter server to aggregate and update the
model parameters. 
The unique feature of FedAvg is that each client runs {\em multiple local stochastic gradient descent (SGD) steps} between two consecutive communication rounds to reduce the communication frequency between the clients and server.
In the literature, it has been shown that FedAvg-type algorithms with local updates achieve fast convergence rates while enjoying a low communication complexity.
More importantly, the low communication complexity due to local SGD updates renders FedAvg-type algorithms ideal for deployment over wireless edge networks, where the communications links could likely be highly dynamic, stochastic, and unreliable.

However, even with the evident benefit of being communication-efficient, the impact of local updates on the {\bf generalization performance} of FedAvg-type algorithms remains poorly understood.
The lack of such theoretical understanding affects the long-term and large-scale adoption of FL. 
%
Particularly, in the FL literature, there remains a significant amount of controversy over how the FL generalization performance is affected under the intricate interplay between {\em data heterogeneity} and {\em local update steps}.
Specifically, some researchers speculated that data heterogeneity results in poor generalization through empirical experiments~\cite{caldarola2022improving,zhao2018niid}, while other works argued that FedAvg can generalize very well with data heterogeneity~\cite{wang2022unreasonable,lin2019don,wang2021cooperative,ortiz2021trade}.
Notably, it has been empirically demonstrated that FedAvg-type algorithms using a fine-tuned number of local update steps exhibit a better generalization performance than the parallel stochastic gradient descent (SGD) algorithm~\cite{lin2019don,wang2021cooperative,ortiz2021trade}.
So far, however, there is {\em no} theoretical guiding principle on how to choose an appropriate number of local update steps to achieve good generalization performance in the FL literature.
Given the ever-increasing importance of FL as a distributed learning mechanism over networks, a compelling open question arises: 

\begin{tcolorbox}[left=1.2pt,right=1.2pt,top=1.2pt,bottom=1.2pt]
\textbf{(Q)}: How does the local update process, when coupled with data heterogeneity, impact the generalization performance of federated learning?
\end{tcolorbox}

In the FL literature, there have been some initial attempts to theoretically understand the generalization performance of FL (see Section~\ref{app.more_related_work} for more discussions).
The first line of work employs the traditional analytical tools from statistical learning, such as the ``probably approximately correct'' (PAC) framework. 
These works focus on the domain changes due to the data and system heterogeneity. 
For example, the works in \citet{yuan2022what} and \citet{hu2023generalization} assumed that clients’ data distributions are drawn from a meta-population distribution. Accordingly, two generalization gaps in FL are defined. One is the participation generalization gap, which measures the difference between the empirical and expected risk for participating clients; and the other is the non-participation generalization gap, which measures the difference in the expected risk between participating and non-participating clients.
The second class of works studied the training dynamic near a manifold of minima and focused on the effect of stochastic gradient noise on generalization.
For instance, the FL generalization behavior was investigated in \citet{caldarola2022improving} through the lens of the geometry of the loss and Hessian eigenspectrum, while the long-term FL generlization behavior was studied in 
\citet{gu2022and} using the stochastic differential equation (SDE) approximation.
Recently, researchers studied FL generalization under data heterogeneity through algorithmic stability \citet{sun2023understanding}.
Also, rate-distortion theoretic bounds on FL
the generalization have been established in \citet{sefidgaran2023federated}.
%

Despite the valuable insights on FL generalization offered by the aforementioned existing works, it is important to note that they primarily yield asymptotic results by focusing on domain changes or describing asymptotic behavior such as sufficiently large communication rounds and fine-tuned local steps. 
Hence, these works all fell short of providing an explicit relationship to characterize how critical factors in FL, (e.g., the number of local updates, the number of communication rounds, and data heterogeneity) affect the generalization of FL in general. 
%
To bridge this gap, as a starting point, we conduct the first theoretical study on the number of local updates on FL's generalization performance based on the recent double-descent theoretical framework for over-parameterized learning models.
Our objective is {\em to explicitly quantify the influence of local update steps, data heterogeneity, and the total number of communication rounds on the generalization performance of FL,} all of which are particularly relevant to the deployment of FL over edge networks.
We highlight our contributions as follows:
\begin{list}{\labelitemi}{\leftmargin=1em \itemindent=0em \itemsep=.2em}
\item To lay a theoretical foundation for FL generalization, we start with a linear model with Gaussian features in over-parameterized (related to benign overfitting \citep{li2023estimation,ju2020overfitting,belkin2020two}) and under-parameterized regimes. 
Specifically, in round $t$ of FL, agent $i$ aims to learn a model $\bm{w}$ through its own local data that follow the underlying ground truth model $\yy{i}{t} = \XX{i}{t}\T \wijTrue{i}{t} + \ee{i}{t}, i \in [m]$, where $\wijTrue{i}{t}$ is the ground-truth slope. 
By considering different $\wijTrue{i}{t}$, the data samples ($\mathbf{X}_{i}, \mathbf{y}_i$) can simulate various patterns of data heterogeneity, including both stationary (i.e., $\wijTrue{i}{t} = \bm{w}_{(i)}$) and online/non-stationary (i.e., $\wijTrue{i}{t}$ is time-varying) cases.
Utilizing this model allows us to efficiently disentangle the distinct influences of heterogeneous data, local update processes, and communication rounds in FL.

\item Based on the aforementioned analytical model, we provide {\em closed-form} expressions of the generalization error of FedAvg-type algorithms in terms of the number of local update steps. 
Specifically, we rigorously quantify the impact of local update steps (denoted as $K$) under three representative regimes ($K=1$, $K< \infty$, and $K=\infty$) and show how the generalization performance evolves with respect to the number of communication rounds $t$. 
Our results reveal some interesting insights: 1) a good pre-trained model ``helps'' but only to some extent; 2) the effect of noise and heterogeneity accumulates but can be limited; 3) the optimal number of local updates exists only in ``some cases,'' hence {\em resolving the empirical controversy} regarding the effect of $K$.

\item We note that, in addition to offering insights into FL's deployment over edge networks, our work is also of independent interest in learning theory. 
Specifically, our closed-form expressions of the FL generalization error contribute to answering, in the FL context, the fundamental question of why an over-parameterized model can generalize well. 
Note that over-parameterized deep neural networks (DNNs) have been widely used in machine learning (including FL), although it remains a myth why they can generalize well (also known as ``benign overfitting''). 
In the recent literature, a promising approach toward resolving the benign overfitting question is the so-called ``double-descent'' theoretical framework \citep{belkin2018understand, belkin2019two,bartlett2019benign,hastie2019surprises,muthukumar2019harmless,mitra2019understanding,ju2020overfitting} that starts from over-parameterized linear models. 
In this work, we extend such double-descent analysis into the FL regime where the distributed learning procedure is more complex than classical centralized learning due to the complications of local updates and data heterogeneity.

\end{list}

The rest of this paper is organized as follows.
\cref{app.more_related_work} reviews the literature to put our work in
comparative perspectives. In \cref{sec:systemModel}, we introduce the over-parameterized linear model in our FL system.
\cref{sec:mainResult} presents the main generalization analysis, which is followed by the (sketched) proofs of some key results in \cref{app.proof_opt_K,sec.proof_sketch}. The conclusion is in \cref{sec:conclusion}.
\section{Related Work}\label{app.more_related_work}

\textbf{1) Federated Learning:}
Federated Learning (FL) has emerged as a popular distributed learning framework, which harnesses the collaborative power of multiple clients to learn a shared model~\citep{li2019federated, yang2019federated, kairouz2019advances}. 
Since its inception, FL systems have demonstrated increasing prowess, effectively handling diverse forms of heterogeneity in data, network environments, and worker computing capabilities.
A large number of FL algorithms, including FedAvg~\citep{mcmahan2016communication} and its various adaptations~\citep{li2018fedprox, zhang2020fedpd, karimireddy2020scaffold, karimireddy2020mime, acar2021feddyn, yang2021achieving, yang2022anarchic}, have been proposed in the literature. 
However, it is worth noting that these works only provide insights into the convergence in optimization, while lacking the understanding of generalization performance for FL.

\smallskip
\noindent \textbf{2) Generalization Performance of FL:}
In the literature, there have been relatively limited studies on the generalization of FL. 
We categorize these works into three distinct classes.
The first line of work employs the traditional analytical tools from statistical learning.
The work in \citet{yuan2022what} assumed that clients’ data distributions are drawn from a meta-population distribution. Accordingly, they define two generalization gaps in FL: one is the participation generalization gap to measure the difference between the empirical and expected risk for participating clients, the same as the definition in classic statistical learning; the second is the non-participation generalization gap, which measures the difference of the expected risk between participating and non-participating clients.
Following this two-level distribution framework, sharper bounds are provided ~\cite{hu2023generalization}.
Also, the probably approximately correct (PAC) Bayesian framework is used in \citet{zhao2023federated} to investigate a tailored generalization bound for heterogeneous data in FL.
Recently, some researchers studied FL generalization under data heterogeneity through algorithmic stability \citet{sun2023understanding}.
Meanwhile, PAC-Bayes and rate-distortion theoretic bounds on
FL generalization errors have been established in \citet{sefidgaran2023federated}.
Similar tools are also used to study FL generalization in \citep{chor2023more,barnes2022improved,sefidgaran2022rate,huang2021fl}.

The second line of work studied the FL training dynamic near a manifold of minima and focused on the effect of stochastic gradient noise on generalization. 
These works used ``sharpness'' as a tool for characterizing generalization. 
For instance, the generalization behavior 
was investigated in \citet{caldarola2022improving} and \citet{shi2023make} through the lens of the geometry of the loss and Hessian eigenspectrum, which links the model’s lack of generalization capacity to the sharpness of the solution under ideal client participation.
Based on sharpness, a momentum algorithm with better generalization was proposed in \citet{qu2022generalized}.
Also, the long-term generalization behavior of FL is studied in \citet{gu2022and} using the stochastic differential equation (SDE) approximation, which showed that local steps could lead to better generalization under appropriate conditions (e.g., a sufficiently small learning rate, a sufficiently large number of communication rounds, and an appropriately chosen number of local update steps).

We note that all of these existing works on FL generalization only provide asymptotic results on domain changes or describe limiting behavior, such as a large number of communication rounds under a carefully chosen number of local updates. 
Consequently, they all fell short of establishing a direct quantification that demonstrates how key FL factors (i.e., data heterogeneity, the number of local updates, and the communication round) affect FL generalization.

\smallskip
\noindent
\textbf{3) Benign Overfitting and Double Descent:} 
Since our work is intimately related to the double-descent framework for resolving the ``benign overfitting'' mystery, it is also insightful to provide a quick overview of this research area here.
As an initial step to understanding why over-parameterized DNNs generalize well (i.e., ``benign overfitting'') and exhibit the so-called ``double-descent'' phenomenon (i.e., the generalization risk descends again beyond the conventional ``U-shape'' curve in the over-parameterized regime), early attempts in this area started from exploring the minimum $\ell_2$-norm \citep{belkin2018understand, belkin2019two,bartlett2019benign,hastie2019surprises,muthukumar2019harmless} or $\ell_1$-norm  \citep{mitra2019understanding,ju2020overfitting} overfitted solutions of the linear models with Gaussian or Fourier features.
Later studies in this area investigated the generalization performance of overfitted solutions of shallow neural network approximations. 
For example, researchers have considered random feature (RF) models \citep{mei2019generalization}, two-layer neural tangent kernel (NTK) models \citep{arora2019fine,satpathi2021dynamics,ju2021generalization}, and three-layer NTK models \citep{ju2022generalization}. 
Note that all of these studies have focused only on the centralized learning settings, while our work considers the benign overfitting phenomenon in the FL settings, which are far more complex due to the multi-agent nature and unique complications due to FL, such as local updates and data heterogeneity.

\section{System Model} \label{sec:systemModel}

\subsection{The Ground-Truth Model, the Learning Model, and Training Samples}

As a first step toward a theoretical understanding of the impacts of local updates on the FL generalization performance, we consider the general linear ground truth model which is widely used in the literature on machine learning theory (e.g., \citet{li2023estimation,ju2020overfitting,belkin2020two}):
\begin{align}
    y = \tilde{\bm{x}}\T \tilde{\bm{w}}+\epsilon,\label{eq.ground_truth_sparse}
\end{align}
where $\tilde{\bm{x}} \in \mathds{R}^s$ denotes the feature vector that consists of $s$ true features, $\tilde{\bm{w}}\in \mathds{R}^s$ denotes the corresponding ground-truth model parameters, and $\epsilon \in \mathds{R}$ denote the noise in the output $y\in \mathds{R}$.

Let $p$ denote the number of features/parameters for the chosen learning model. In other words, a sample is in the form of $(\bm{x}\in \mathds{R}^p,y)$. 
In practice, the number of features could be large (may or may not be necessary) to make sure that all true features are included. 
Thus, we assume that $p\geq s$ and those $p$ features include all necessary features\footnote{Our result can be generalized to the case of missing features by treating the missing part as noise.}. 
Without loss of generality, we let $\tilde{\bm{x}}$ be the first $s$ elements of $\bm{x}$. Correspondingly, we define $\bm{w} \defeq  \left[\begin{smallmatrix}\tilde{\bm{w}}\\\bm{0}\end{smallmatrix}\right]\in \mathds{R}^p$. Thus, \cref{eq.ground_truth_sparse} can be rewritten as $y = \bm{x}\T \bm{w} + \epsilon$. We note that such a linear model is considered in many works on theoretical understanding of the
double-descent phenomenon in deep learning theory \cite{belkin2018understand, belkin2019two,bartlett2019benign,hastie2019surprises,muthukumar2019harmless,mitra2019understanding,ju2020overfitting}.  In \cref{sec:mainResult}, we will also show that these linear models lead to insights that have been observed in practical (non-linear) FL.

Consider the FL setting with $m$ clients, where the communication rounds are indexed by $t=1,2,\cdots,T$. We use $[m]$ to denote the set $\{1,2,\cdots,m\}$, and use $[T]$ to denote the set $\{1,2,\cdots,T\}$. 
We use the subscript $(\cdot)_{(i),t}$ to denote a quantity for the $i$-th agent at the $t$-th round.
In the $t$-th communication round of FL, the $i$-th client uses $\nij{i}{t}$ training samples.
Stacking these training samples, we have the following matrix equation.
\begin{align}\label{eq.linear_model}
    &\yy{i}{t} = \XX{i}{t}\T \wijTrue{i}{t} + \ee{i}{t},
\end{align}
where $\XX{i}{t}\in \mathds{R}^{p \times \nij{i}{t}}$, $\wijTrue{i}{t}\in \mathds{R}^p$, $\yy{i}{t}\in \mathds{R}^{\nij{i}{t}}$, and $\ee{i}{t}\in \mathds{R}^{\nij{i}{t}}$. 
It is worth noting that Eq.~\ref{eq.linear_model} is quite general, including both stationary scenarios where $\wijTrue{i}{t} = \mathbf{w}_{i}$ and non-stationary scenarios with time-varying $\wijTrue{i}{t}$ that accounts for environmental changes at the edge devices.
The subscript notation $(\cdot)_{(i),t}$ in $\wijTrue{i}{t}$ offers a more general framework to model various complications in FL, such as unbalanced data, heterogeneity, and non-stationarity.
In general FL, there exist ground-truth parameters $\vwTrue\in \mathds{R}^p$ in the system, which corresponds to the target solution of FL.
For example, in simple FL with balanced data, the ground truth is can be written as $\vwTrue = \frac{1}{m} \sum_{i \in [m]} \mathbf{w}_{i}$.


\subsection{Data Distribution, Heterogeneity, and Non-stationarity}

To analytically characterize the impact of local updates on the FL generalization performance, we need some assumptions on the distribution of the training data $\left(\XX{i}{t},\ \yy{i}{t}\right)_{i\in [m],t=1,2,\cdots,T}$.
First, we adopt the independent Gaussian features and noise assumption, which is a common assumption in the literature (e.g., \citep{belkin2018understand,ju2020overfitting}) for analyzing over-parameterized generalization performance. Specifically, we have the following assumption:

\begin{assumption}\label[assumption]{as.Gaussian}
For any $i,t$, each element of $\XX{i}{t}$ follows i.i.d. standard Gaussian distribution, and each element of $\ee{i}{t}$ follows independent Gaussian distribution with zero mean and variance $\sigmaij{i}{t}^2$.
\end{assumption}

\cref{as.Gaussian} assumes that each dataset per round is unique and freshly obtained, mirroring the conditions of an online data acquisition environment. Besides, it also serves as a realistic approximation for scenarios involving large, fixed datasets.

Since we consider linear models, the heterogeneity of the variance of $\XX{i}{t}$ can be normalized, i.e., it is equivalent to only consider the heterogeneity of the variance of $\ee{i}{t}$ as described in \cref{as.Gaussian}.
Note that although $\XX{i}{t}$ has identical distribution among different clients, the training data are heterogeneous in $\yy{i}{t}$ because $\wijTrue{i}{t}$ can be different and $\sigmaij{i}{t}$ may have different values. In other words, $\yy{i}{t}$ and $\yy{j}{t}$ may have different distributions for different $i$ and $j$ in our model. 
To quantify the level of heterogeneity in the ground-truth $\wijTrue{i}{t}$, we define 
\begin{align}\label{eq.define_gammait}
    \gammait{i}{t}\defeq \vwTrue-\wijTrue{i}{t}.
\end{align}
Intuitively, $\gammait{i}{t}$ describes the (small) perturbation of agent $i$'s ground truth at the $t$-th round with respect to the target ground truth $\vwTrue$.
The quantification of data heterogeneity here aligns with established research in FL, where the assumption $\| \nabla f_i(\mathbf{x}) - \nabla f(\mathbf{x}) \|^2 \leq \sigma_G^2$ is commonly used to quantify data heterogeneity~\cite{mcmahan2021fl}. 
In the case of a linear model, this assumption can be equivalently expressed as $\| \gammait{i}{t} \|^2 \leq \sigma_G^2$.




\subsection{Federated Learning Process}

We use mean-squared-error (MSE) as the training loss, i.e., the training loss of the parameters $\what$ on $n$ samples $(\mathbf{X},\bm{y})$ is defined as:
\begin{align}\label{eq.train_loss}
    L(\what;\mathbf{X},\bm{y}) \defeq \frac{1}{2n} \norm{\bm{y}-\mathbf{X}\T \what}^2.
\end{align}
We consider the FedAvg algorithm~\cite{mcmahan2017communication}, where a central server averages the local updates of each agent (weighted by each agent's number of samples) and then distributes the weighted averaged result to all agents as the initial point of the next local update. We use $\wavgGeneral{t} \in \mathds{R}^p$ to denote the weighted average result at round $t$, and use $\wijGeneral{i}{t}\in \mathds{R}^p$ to denote the result of the local update of agent $i$ at round $t$. The weighted average can be expressed as:
\begin{align}\label{eq.FedAvg}
    \wavgGeneral{t}\defeq \mySumP{\nij{i}{t} \wijGeneral{i}{t}}.
\end{align}
Let $\wInit$ denote the initialization of the parameters (e.g., starting from a pre-trained model). 
For notational convenience, we define $\wavgGeneral{0}\defeq \wInit$.

Recall that the focus of this paper is to examine the impact of local updates on FL generalization. 
To this end, we use a parameter $K>0$ to denote the number of local update steps.
We consider the following three regimes in terms of different $K$ values: $K=1$, $K<\infty$, and $K=\infty$. 
We use superscripts $(\cdot)^{K=1}$, $(\cdot)^{K<\infty}$, and $(\cdot)^{K=\infty}$ to these cases, respectively. 
For example, $\wwavg{t}$ and $\wwij{i}{t}$ denote the values of $\wavgGeneral{t}$ and $\wijGeneral{i}{t}$, respectively, when we consider the setting of $K=1$.

\subsubsection{\texorpdfstring{$K=1$}{K=1} (One-Step Gradient)} \label{subsec.model_K1}

The simplest algorithm in FL is to perform only one gradient step in each client's local update. 
Specifically, for all clients $i\in [m]$ and each round $t=1,2,\cdots,T$, the result of the local step (denoted by $\wwij{i}{t}$) can be written as:
\begin{align*}
    &\wwij{i}{t}\defeq \wwavg{t-1}-\step \frac{\partial L(\wwavg{t-1};\XX{i}{t},\yy{i}{t})}{\partial \wwavg{t-1}},
\end{align*}
where $\step>0$ denotes client $i$'s learning rate (i.e., step size) of the local update in round $t$.

\subsubsection{General \texorpdfstring{$K<\infty$}{finite K} (Multi-Batch Local Updates)}\label{subsubsec.finite_K}

The general case in FL is that in each round $t$, every client performs local updates multiple (finite) times. 
In the $k$-th update, client $i$ uses $\nn$ data $(\Xk,\yk)$ (as a batch) where $\Xk\in \mathds{R}^{p\times \nn}$ and $\yk\in \mathds{R}^{\nn}$. 
In this paper, we consider the case where $\Xk$ for all $k\in [K]$ are disjoint and their union is $\XX{i}{t}$. 
In other words, the data $\XX{i}{t}$ are partitioned evenly into $K$ batches (and thus we have $K \cdot \nn = \nij{i}{t}$).
We define $\wk{k}$ as the result after the $k$-th batch for client $i$ in round $t$. 
Specifically, for the local update in the $k$-th batch ($k=1,2,\cdots,K$), we have
\begin{align*}
    \wk{k} \!&\defeq\! \wk{k-1} \!\!-\!\! \step \frac{\partial L(\wk{k-1};\Xk,\yk)}{\partial \wk{k-1}},
\end{align*}
where $\step > 0$ denotes the learning rate.
We note that $\wk{0}\defeq \wwavg{t-1}$ and $\wijj{i}{t}\defeq \wk{K}$. 
Also, the general case degenerates to that of \cref{subsec.model_K1} when $K=1$.

\subsubsection{\texorpdfstring{$K=\infty$}{K is infinity} (Convergence in Local Update)}
In this case with $K=\infty$, we consider each client's solution that the local GD/SGD converges to\footnote{The difference between a very large but finite $K$-value and $K=\infty$ has been characterized in the literature of the convergence analysis on gradient descent, e.g., \citet{gower2018convergence,garrigos2023handbook}.}, which is different from \cref{subsec.model_K1,subsubsec.finite_K} where every sample is only trained once.
In the under-parameterized regime $p<\nij{i}{t}$, the convergence point at each client corresponds to the solution that minimizes the local training loss, i.e.,
\begin{align*}
    \wij{i}{t}\defeq \argmin_{\what} L(\what;\XX{i}{t},\yy{i}{t}),\ \text{ when }p < \nij{i}{t}.
\end{align*}
In the over-parameterized regime $p>\nij{i}{t}$, 
there are infinitely many solutions that make the training loss zero with probability $1$, i.e., overfitted solutions. 
It is known in the literature that an overfitted solution corresponding to GD/SGD on a linear model in the over-parameterized regime has the smallest $\ell_2$-norm of the change of parameters \cite{gunasekar2018characterizing,lin2023theory}. Specifically, the convergence point of the local updates corresponds to the solution to the following optimization problem: for $t=1,2,\cdots,T$, when $p > \nij{i}{t}$, we have
\begin{align}
    &\wij{i}{t}\defeq \argmin_{\what}~~ \norm{\hat{\vw}-\wavg{t-1}}, \\
    &\text{subject to}~~ \XX{i}{t}\T\what=\yy{i}{t}.\label{eq:op}
\end{align}
The constraint in \cref{eq:op} implies that the training loss is exactly zero (i.e., overfitted, which is also known as the  interpolation regime).

\subsection{Generalization Performance Metric}\label{subsec.generalization_metric}

We then use the distance between the trained model $\hat{\vw}$ and the ground truth model $\vwTrue$, i.e., model error, to characterize the generalization performance: $L^{\text{model}}(\hat{\vw})=\norm{\hat{\vw}-\vwTrue}^2$. 
Such model error is equal to the expected test
error in some cases.\footnote{We can show that the model error is equal to the expected test error for noise-free data. See \cref{le.model_error}.}
For convenience, we define
\begin{align}
    \deltatGeneral{t}\defeq \vwTrue - \wavgGeneral{t},\qquad t=0,1,2,\cdots,T.\label{eq.def_delta}
\end{align}
Therefore, to characterize the generalization performance of FL at the end of round $t$, we need to quantify $\norm{\deltatGeneral{t}}^2$ with respect to $p$, $K$, $n$, learning rates, initialization, etc. Note that $\deltatGeneral{0}$ characterizes the difference between the initial weights $\wInit$ (which can be viewed as starting from an initial or pre-trained model) and the ideal solution $\vwTrue$ (thus $\deltatGeneral{0}$ is irrelevant to the configuration of $K$).

\subsection{Extra Notations}

Let $\seq{i}{\cdot}$ denote a sequence of numbers/vectors indexed by $i$.
For $l=1,2,\cdots$, and for a real number/vector $\bm{\beta}_0$, we define a mapping $\F$ as follows:
\begin{align}\label{eq.def_F_formula}
    \F(l, \bm{\beta}_0, \seq{i}{a_i}, \seq{i}{b_i}) \defeq \prod_{i=1}^l a_i \bm{\beta}_0 + \sum_{i=1}^l b_i \cdot \prod_{j=i+1}^l a_j.
\end{align}
\cref{eq.def_F_formula} corresponds to the general-term formula of $\bm{\beta}_l$ for the recurrence relation $\bm{\beta}_i = a_i \bm{\beta}_{i-1} + b_i$.

\section{Main Results} \label{sec:mainResult}

In this section, we will present the closed-form expression of $\E\norm{\deltatGeneral{t}}^2$ for all three cases of $K$-values. 
These expressions are complex since our system model considers both the non-stationarity along different rounds and the heterogeneity across different clients. 
To make our results more accessible, we also provide a simplified version of our results for the special case, where the system is stationary across rounds and the heterogeneity across clients are bounded. 
Specifically, the simple case is defined as: 
for all $i\in [m], t\in [T]$,
\begin{align}
    &\nij{i}{t}= n,\quad \step = \alpha,\quad \sigmaij{i}{t}=\sigma, \label{eq.simple_case1}\\
    & \sum\nolimits_{j\in [m]}\gammait{j}{t}= 0, \label{eq.simple_case4} \\
    &\frac{\sum_{j \in [m]}\norm{\gammait{j}{t}}^2}{m}= \gammaSquareAvg,\label{eq.simple_case5}
\end{align}
where 
$\gammaSquareAvg \geq 0$ denotes the level of heterogeneity. 
Here, we consider the balanced data case with a constant learning rate and constant noise in data.
The expression $\sum_{j\in [m]}\gammait{j}{t}= 0$ in Eq.~\eqref{eq.simple_case4} indicates that the ground-truth solution $\vwTrue$ is the average of the all clients' ground truth $\wijTrue{i}{t}$, i.e., $\vwTrue = \frac{1}{mT} \sum_{i \in [m], t \in [T]} \wijTrue{i}{t}$.
With the above notations, we are now ready to present our main results in the following subsections. It is important to note that our general results, including \cref{eq.result_K1,eq.multi_K,eq.inf_K_main,eq.underparameterized}, are derived independently of the more restrictive \cref{eq.simple_case1,eq.simple_case4,eq.simple_case5}, which are only applied in simplified scenarios such as \cref{eq.simple_K1,eq.simple_Kc,eq.main_thm_simple}.

\subsection{The \texorpdfstring{$K=1$}{K=1} Case}

We define the following short-hand notations:
\begin{align*}
    &\bm{g}_l^{K=1} \defeq  \F(l, \deltatGeneral{0},\seq{t}{\mySumP{\nij{i}{t}(1-\step)}}, \\
    & \qquad \qquad \seq{t}{\mySumP{\step\nij{i}{t}\gammait{i}{t}}}),\numberthis \label{eq.def_gt}\\
    &H_t \!\! \defeq \!\! \mySumSquareP{\left(\sum_{i\in [m]}\nij{i}{t}(1\!-\!\step)\right)^2 \!\!\!\!+\! \sum_{i\in [m]}\step^2 \nij{i}{t}(p\!\!+\!\!1) }, \numberthis \label{eq.def_Ht}\\
\end{align*}
\begin{align*}
    &G_t \!\! \defeq \!\! \mySumSquareP{\sum_{i\in [m]}\!\step^2 p \nij{i}{t}\sigmaij{i}{t}^2}\!\!+\!\!\mySumSquareP{\norm{\sum_{i\in [m]}\!\step \nij{i}{t}\gammait{i}{t}}^2} \\
    &+ \mySumSquareP{\sum_{i\in [m]} \step^2 \nij{i}{t}(p+1) \norm{\gammait{i}{t}}^2} + \\
&\mySumSquareP{2\left(\sum_{i\in [m]}\!\nij{i}{t}(1\!-\!\step)\right) \!\! \left(\sum_{i\in [m]}\!\nij{i}{t}\step \gammait{i}{t}\T\bm{g}_{t-1}\right)}\\
    &-\mySumSquareP{2\sum_{i\in [m]}\step^2 \nij{i}{t}(p+1) \gammait{i}{t}\T \bm{g}_{t-1}^{K=1}}.\numberthis \label{eq.def_Gt}
\end{align*}

\begin{theorem}\label{th.K1}
When $K=1$, we have
\begin{align}
    \E \norm{\deltawwt{t}}^2 \!\!=\! \F(t, \norm{\deltatGeneral{0}}^2\!\!, \seq{l}{H_l}\!, \seq{l}{G_l}), \forall t\in [T].\label{eq.result_K1} 
\end{align}
For the simple case described by \cref{eq.simple_case1,eq.simple_case4,eq.simple_case5}, we have
\begin{align}\label{eq.simple_K1}
    &\E \norm{\deltawwt{t}}^2 = H^{t} \norm{\deltatGeneral{0}}^2+ \frac{1-H^t}{1-H} G,
\end{align}
where $H\defeq  (1-\alpha)^2 + \frac{\alpha^2 (p+1)}{mn},\ G \defeq \frac{p\alpha^2 \sigma^2}{mn} + \frac{\alpha^2 (p+1)}{mn}\cdot \gammaSquareAvg$.
\end{theorem}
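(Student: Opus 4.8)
The plan is to derive a one-step recursion for $\deltawwt{t}$, reduce it to a scalar affine recursion for $\E\norm{\deltawwt{t}}^2$, and then unroll that recursion via the closed form $\F$ of \cref{eq.def_F_formula}. First I would write the local update in closed form: since the MSE loss in \cref{eq.train_loss} has gradient $\frac{\partial L(\what;\mathbf{X},\bm{y})}{\partial\what}=\frac1n(\mathbf{X}\mathbf{X}\T\what-\mathbf{X}\bm{y})$, substituting $\yy{i}{t}=\XX{i}{t}\T\wijTrue{i}{t}+\ee{i}{t}$ together with $\gammait{i}{t}=\vwTrue-\wijTrue{i}{t}$, and then averaging as in \cref{eq.FedAvg} (the per-client $1/\nij{i}{t}$ in the gradient cancels against the averaging weight $\nij{i}{t}$), yields
\begin{align*}
    \deltawwt{t} \;=\; \Big(\iMatrix{p}-\sum_{i\in[m]}\mathbf{A}_{(i),t}\Big)\deltawwt{t-1}\;+\;\sum_{i\in[m]}\mathbf{A}_{(i),t}\gammait{i}{t}\;-\;\sum_{i\in[m]}\bm{b}_{(i),t},
\end{align*}
where $N_t\defeq\sum_{i\in[m]}\nij{i}{t}$, $\mathbf{A}_{(i),t}\defeq\frac{\step}{N_t}\XX{i}{t}\XX{i}{t}\T$, and $\bm{b}_{(i),t}\defeq\frac{\step}{N_t}\XX{i}{t}\ee{i}{t}$. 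By \cref{as.Gaussian} the round-$t$ data are freshly drawn, so $\deltawwt{t-1}$ is independent of $\{\XX{i}{t},\ee{i}{t}\}_{i\in[m]}$, and I would condition on the $\sigma$-algebra $\F_{t-1}$ generated by rounds $1,\dots,t-1$.

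The heart of the proof is a Gaussian moment computation conditioned on $\F_{t-1}$. Using $\E[\XX{i}{t}\XX{i}{t}\T]=\nij{i}{t}\iMatrix p$ one gets $\E[\deltawwt{t}\mid\F_{t-1}]= \mySumP{\nij{i}{t}(1-\step)}\cdot\deltawwt{t-1}+\mySumP{\step\nij{i}{t}\gammait{i}{t}}$; since $\deltawwt{0}=\deltatGeneral{0}$ is deterministic, induction gives $\E\deltawwt{t}=\bm{g}_t^{K=1}$ of \cref{eq.def_gt}. For the second moment I would expand $\norm{\deltawwt{t}}^2$ into a ``signal'' square (the $\mathbf{A}_{(i),t}$ terms) plus a ``noise'' square plus cross terms; the cross terms vanish since, conditioning further on $\{\XX{i}{t}\}$, each $\bm{b}_{(i),t}$ is zero-mean; the noise square contributes $\mySumSquareP{\sum_{i\in[m]}\step^2 p\,\nij{i}{t}\sigmaij{i}{t}^2}$ via $\E\norm{\XX{i}{t}\ee{i}{t}}^2=p\,\nij{i}{t}\sigmaij{i}{t}^2$; and the signal square requires independence across distinct clients together with the Wishart second moment
\begin{align*}
    \E\big[(\XX{i}{t}\XX{i}{t}\T)^2\big]\;=\;\nij{i}{t}\,(\nij{i}{t}+p+1)\,\iMatrix p.
\end{align*}
I expect this to be the main obstacle: the $(p+1)$ factors throughout $H_t$ and $G_t$ are precisely the excess $\nij{i}{t}+p+1-\nij{i}{t}$ over the naive squared mean, and the delicate bookkeeping is tracking which quadratic forms in $\deltawwt{t-1}$ and $\gammait{i}{t}$ acquire a factor $\nij{i}{t}^2$ versus $\nij{i}{t}(p+1)$.

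Finally I would group the conditional second moment by powers of $\deltawwt{t-1}$. The coefficient of $\norm{\deltawwt{t-1}}^2$ collapses to $\frac{1}{N_t^2}\big[(\sum_{i}\nij{i}{t}(1-\step))^2+\sum_{i}\step^2\nij{i}{t}(p+1)\big]$, which is exactly $H_t$ of \cref{eq.def_Ht} (the perfect square coming from the mean, the $(p+1)$ piece from the fluctuation); taking one further expectation replaces the terms linear in $\deltawwt{t-1}$ by their mean $\bm{g}_{t-1}^{K=1}$, and the $\deltawwt{t-1}$-free remainder is exactly $G_t$ of \cref{eq.def_Gt}. Hence $\E\norm{\deltawwt{t}}^2=H_t\,\E\norm{\deltawwt{t-1}}^2+G_t$ with $\E\norm{\deltawwt{0}}^2=\norm{\deltatGeneral{0}}^2$, and unrolling this scalar affine recursion through \cref{eq.def_F_formula} gives \cref{eq.result_K1}. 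For the simple case I substitute $\nij{i}{t}=n$, $\step=\alpha$, $\sigmaij{i}{t}=\sigma$, $\frac1m\sum_{j}\norm{\gammait{j}{t}}^2=\gammaSquareAvg$: the condition $\sum_{j}\gammait{j}{t}=0$ from \cref{eq.simple_case4} kills both $\bm{g}_{t-1}^{K=1}$-dependent terms of $G_t$ as well as its $\norm{\sum_i\step\nij{i}{t}\gammait{i}{t}}^2$ term, so $H_t\equiv H=(1-\alpha)^2+\frac{\alpha^2(p+1)}{mn}$ and $G_t\equiv G=\frac{p\alpha^2\sigma^2}{mn}+\frac{\alpha^2(p+1)}{mn}\gammaSquareAvg$ are constants, whence $\F(t,\norm{\deltatGeneral{0}}^2,\seq{l}{H},\seq{l}{G})=H^t\norm{\deltatGeneral{0}}^2+\frac{1-H^t}{1-H}G$, which is \cref{eq.simple_K1}.
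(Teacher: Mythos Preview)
Your proposal is correct and follows essentially the same route as the paper's proof: derive the one-step recursion for $\deltawwt{t}$ from the gradient of \cref{eq.train_loss}, compute the conditional first and second moments using the Gaussian identities $\E[\XX{i}{t}\XX{i}{t}\T]=\nij{i}{t}\iMatrix{p}$ and $\E[(\XX{i}{t}\XX{i}{t}\T)^2]=\nij{i}{t}(\nij{i}{t}+p+1)\iMatrix{p}$ (the paper's \cref{le.XXXX}), collect into the scalar recursion $\E\norm{\deltawwt{t}}^2=H_t\,\E\norm{\deltawwt{t-1}}^2+G_t$, and unroll via \cref{eq.def_F_formula}. The only cosmetic difference is that the paper decomposes $\deltawwt{t}$ into three labeled summands $\bm{q}_{1i},\bm{q}_{2i},\bm{q}_{3i}$ rather than your $\mathbf{A}_{(i),t}$ and $\bm{b}_{(i),t}$, but the moment computations and the simple-case reduction are identical.
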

We relegate the proof of \cref{th.K1} to the supplemental material \cite[\cref{app.proof_K1}]{SupplementalMaterial2024}. In what follows, two important insights for \cref{th.K1} are in order from the perspectives of model initialization effects and data heterogeneity/noise.

\textbf{Insight~1)~Effect of model initialization: A good initial/pre-trained model helps, but its effect attenuates as the number of communication rounds increases and it cannot address the data heterogeneity challenges.}
In \cref{th.K1}, $\norm{\deltatGeneral{0}}^2$ denotes the model error induced by the model initialization $\wInit$ (cf. \cref{eq.def_delta}). 
\cref{th.K1} shows that starting from a good initialization (e.g., a pre-trained model) reduces the training time required to reach a target error rate. 
The reason is that a good initial/pre-trained model is usually closer to the target solution $\vwTrue$ than a random model initialization.
Thus, $\norm{\deltatGeneral{0}}$ will be small and it helps to reduce the model error. 
This result theoretically explains previously observed experimental results that using pre-trained models as the initialization for FL accelerates the training process~\citep{chen2022pre,nguyen2023where}.
Meanwhile, we note that the coefficient of $\norm{\deltatGeneral{0}}^2$ decreases as $t$ increases when the learning rate is relatively small.\footnote{In \cref{eq.simple_K1}, $H<1$ when $\step < \frac{2}{1+\frac{p+1}{mn}}$.} 
This means that the effect of the pre-trained model diminishes as the number of communication rounds increases. 
As $t \to \infty$, the first term in \cref{eq.simple_K1} asymptotically goes to 0, signifying a vanishing effect of the pre-trained model. 
This finding is consistent with existing analyses in FL, suggesting that pre-training becomes unnecessary with a sufficiently long training~\citep{gu2022and}.
In addition, \cref{th.K1} shows that the error induced by data noise and heterogeneity is not affected by the model initialization. 
This means that even a good initial/pre-trained model cannot alleviate the problems caused by heterogeneous data, which theoretically confirms prior experimental observations~\citep{chen2022pre}.



\smallskip
\textbf{Insight~2)~Effect of noise and heterogeneity: Errors arising from data noise and heterogeneity accumulate as the number of communication rounds increases, but eventually converge to an asymptotic limit.} 
In \cref{eq.simple_K1}, the coefficient of the second error term attributed to data noise and heterogeneity ($G$) is expressed as $\frac{1-H^t}{1-H}=1 + H + H^2 + \cdots + H^{t-1}$.
This implies that the error induced by data noise and heterogeneity accumulates as the value of $t$ increases. 
Meanwhile, this error term is bounded from above and it eventually converges to $\frac{1}{1-H}G$ as $t \to \infty$.
This aligns with the empirical observations that FL algorithms remain effective, despite the occurrence of model drift resulting from data heterogeneity~\citep{wang2022unreasonable,Li2020convergence,Li2020fedprox,yang2020achieving}.


\begin{figure}[t]
    \centering
    \begin{minipage}{0.45\textwidth}
        \centering
        \includegraphics[width=0.8\textwidth]{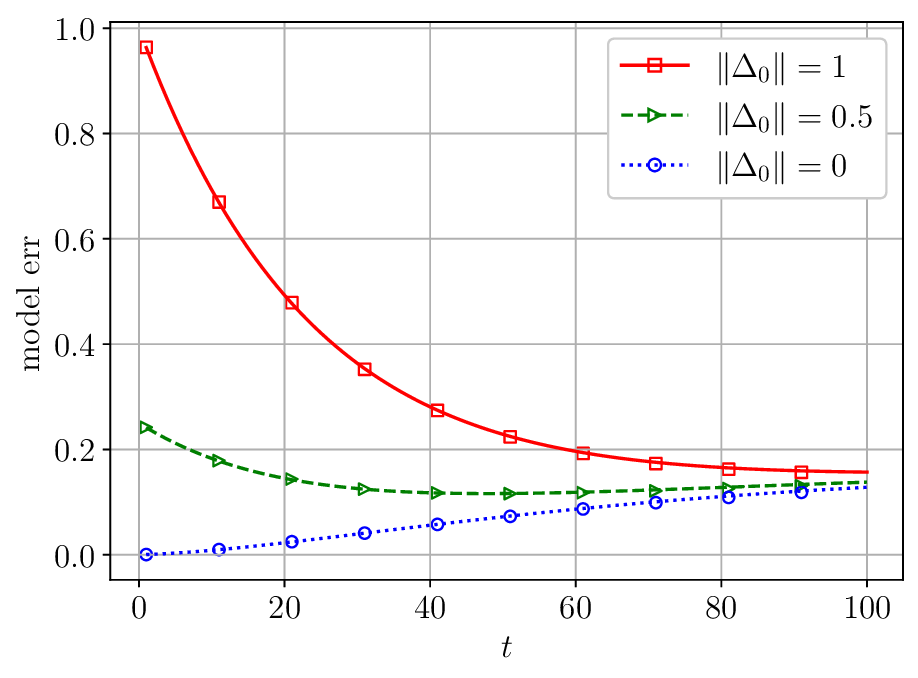}
        \caption{Experimental and analytical values of the model error w.r.t. $t$ where $K=1$, $m=3$, $p=200$, $\nij{i}{t}=50$, $s=5$, $\norm{\gammait{i}{t}} = 0.5$, and $\sigmaij{i}{t} = 0.7$ for all $i,t$. Each marker point is the experimental value by averaging over 20 simulation runs. The curves are theoretical values of \cref{th.K1}. (All markers are close to curves, which validates \cref{th.K1}.)}
        \label{fig.change_t}
    \end{minipage}\hfill
    \begin{minipage}{0.45\textwidth}
        \centering
        \includegraphics[width=0.8\textwidth]{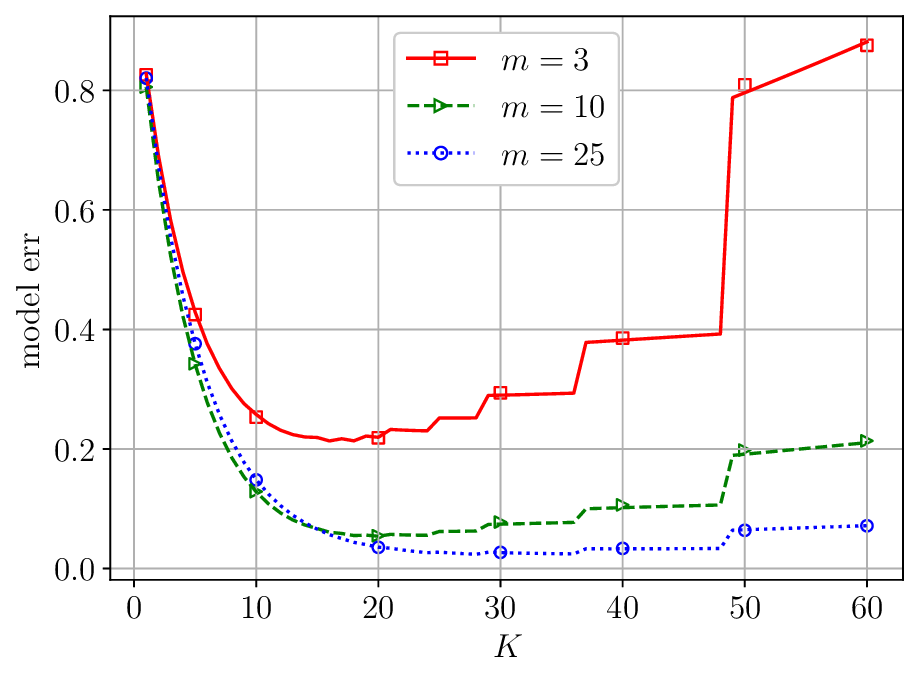}
        \caption{Experimental and theoretical values of the model error w.r.t. $K$ where $t=5$, $s=5$, $p=200$, $\norm{\deltatGeneral{0}}=1$, $\nij{i}{t}=144$, $\norm{\gammait{i}{t}}=0.5$, and $\sigmaij{i}{t} = 0.7$ for all $i,t$. Each marker point is the experimental value by averaging over 20 simulation runs. The curves are theoretical values of \cref{th.multi_K}. The lowest points of the three curves for cases $m=3,10,25$ are located at $K=15,19,27$, respectively.}
        \label{fig.change_K}
    \end{minipage}
\end{figure}

\smallskip
\textbf{Experiments.}
We perform simulations to illustrate the influence of model initialization in FL. 
The experimental setup is as follows: $K=1$, $m=3$, $p=200$, $\nij{i}{t}=50$, $s=5$, $\norm{\gammait{i}{t}} = 0.5$, and $\sigmaij{i}{t} = 0.7$ for all $i,t$.
Each marker point denotes the outcome of simulations averaged over 20 simulation trials.
In \cref{fig.change_t}, we plot the model error with respect to (w.r.t.) $t$ for three different pre-trained models: $\norm{\deltatGeneral{0}}=1$ (red solid line with markers ``$\square$''), $\norm{\deltatGeneral{0}}=0.5$ (green dashed line with markers ``$\triangleright$''), and $\norm{\deltatGeneral{0}}=0$ (blue dotted line with markers ``$\circ$''). 
Generally, the simulation demonstrates the tightness of our theoretical findings and confirms our two insights mentioned above. 
The blue curve, indicative of the smallest initial model error, initially outperforms the other two curves. However, this performance gap diminishes as time progresses. This observed trend aligns with our insights into the impact of model initialization. 
Conversely, as the blue curve originates from the ideal solution, its upward trend with respect to $t$ is solely attributed to noise and heterogeneity. This observation further validates our understanding of the influence of noise and heterogeneity.




\subsection{The General \texorpdfstring{$K<\infty$}{K} Case}

Similar to \cref{eq.def_Ht,eq.def_Gt}, we define $\mathcal{J}_l, \mathcal{Q}_l\in \mathds{R}$. 
The expressions of $\mathcal{J}_l$ and $\mathcal{Q}_l$ only contain $\nij{i}{t}$, $p$, $\step$, $\gammait{i}{t}$, $\deltatGeneral{0}$, and the number of local steps $K$. 
The formal definitions are provided in \cref{eq.def_Jt,eq.def_Qt} at the beginning of supplemental material \cite[\cref{app.proof_multi_K}]{SupplementalMaterial2024} due to space limitation.

\begin{theorem}\label[theorem]{th.multi_K}
When $K <\infty$, we have
\begin{align}
    \E \norm{\deltat{t}}^2 = \F\left(t, \norm{\deltatGeneral{0}}^2, \seq{l}{\mathcal{J}_l},\seq{l}{\mathcal{Q}_l}\right).\label{eq.multi_K}
\end{align}
For the simple case described by \cref{eq.simple_case1,eq.simple_case4,eq.simple_case5} and by further letting $\gammaSquareAvg=0$, we have
\begin{align}\label{eq.simple_Kc}
    \E \norm{\deltat{t}}^2 = \mathcal{J}^t \norm{\deltatGeneral{0}}^2 + \frac{1- \mathcal{J}^t}{1-\mathcal{J}} \cdot \frac{\alpha^2 p \sigma^2}{m\tilde{n}} \cdot \frac{1-\mathcal{A}^K}{1-\mathcal{A}},
\end{align}
where $\tilde{n}\defeq \left\lfloor n/K\right\rfloor,\  \mathcal{A}\defeq (1-\alpha)^2 + \frac{\alpha^2 (p+1)}{\tilde{n}},\ \mathcal{J}\defeq \frac{\mathcal{A}^K+(m-1)(1-\alpha)^{2K}}{m}$.
\end{theorem}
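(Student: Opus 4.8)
The plan is to unroll both loops of FedAvg --- the inner loop of $K$ one-step gradient updates within a round and the outer loop over the $T$ rounds --- so that $\E\norm{\deltat{t}}^2$ obeys a scalar affine recursion $\beta_t=\mathcal{J}_t\beta_{t-1}+\mathcal{Q}_t$, which the general-term formula $\F$ of \cref{eq.def_F_formula} then solves in one line. This mirrors the $K=1$ argument behind \cref{th.K1}; the genuinely new ingredient is controlling the $K$-fold product of independent random matrices produced by the local updates (and, unlike the $K=\infty$ case, no separate treatment of the over- and under-parameterized regimes is needed, since each batch step is a single well-defined gradient step for any $p$ and $\nn$).

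\emph{Step 1: inner-loop recursion.} With the MSE loss \cref{eq.train_loss}, one batch step is $\wk{k}=(\iMatrix{p}-\tfrac{\step}{\nn}\Xk\Xk\T)\wk{k-1}+\tfrac{\step}{\nn}\Xk\yk$; substituting $\yk=\Xk\T\wijTrue{i}{t}+\ek$ and subtracting $\wijTrue{i}{t}$ gives $\wk{k}-\wijTrue{i}{t}=\mathbf{P}_{(i),t,k}\,(\wk{k-1}-\wijTrue{i}{t})+\tfrac{\step}{\nn}\Xk\ek$, where $\mathbf{P}_{(i),t,k}\defeq\iMatrix{p}-\tfrac{\step}{\nn}\Xk\Xk\T$. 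Iterating over $k=1,\dots,K$ (recall $\wk{0}=\wavgg{t-1}$ and $\wk{K}=\wijj{i}{t}$), then using $\wavgg{t-1}-\wijTrue{i}{t}=-\deltat{t-1}+\gammait{i}{t}$ (by \cref{eq.define_gammait,eq.def_delta}) and the averaging rule \cref{eq.FedAvg}, I would obtain the round-level identity $\deltat{t}=\mySumP{\nij{i}{t}\big[(\iMatrix{p}-M_{(i),t})\gammait{i}{t}+M_{(i),t}\deltat{t-1}-\bm{\zeta}_{(i),t}\big]}$, with $M_{(i),t}\defeq\prod_{k=K}^{1}\mathbf{P}_{(i),t,k}$ and $\bm{\zeta}_{(i),t}$ the accumulated noise.

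\emph{Step 2: matrix and noise moments.} Since the $K$ batches are disjoint (so the $\mathbf{P}_{(i),t,k}$ are i.i.d.), it suffices to compute moments of a single $\mathbf{P}=\iMatrix{p}-\tfrac{\alpha}{\tilde n}\mathbf{X}\mathbf{X}\T$ with $\mathbf{X}\in\mathds{R}^{p\times\tilde n}$ standard Gaussian and then chain them; for client $i$ in round $t$ one substitutes $\stepsizeit{i}{t},\nn$. Using $\E[\mathbf{X}\mathbf{X}\T]=\tilde n\iMatrix{p}$ and the Wishart fourth-moment identity $\E[(\mathbf{X}\mathbf{X}\T)^2]=\tilde n(\tilde n+p+1)\iMatrix{p}$, one finds $\E[\mathbf{P}]=(1-\alpha)\iMatrix{p}$ and $\E[\mathbf{P}^2]=\mathcal{A}\iMatrix{p}$ with $\mathcal{A}=(1-\alpha)^2+\tfrac{\alpha^2(p+1)}{\tilde n}$. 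Independence across the $K$ factors telescopes these to $\E[M_{(i),t}]=(1-\step)^{K}\iMatrix{p}$ and $\E[M_{(i),t}\T M_{(i),t}]=\mathcal{A}^{K}\iMatrix{p}$, and likewise renders $\E[(\iMatrix{p}-M_{(i),t})\T(\iMatrix{p}-M_{(i),t})]$ and $\E[(\iMatrix{p}-M_{(i),t})\T M_{(i),t}]$ explicit scalar multiples of $\iMatrix{p}$; meanwhile the $\ek$ are zero-mean and mutually and ambiently independent, so the noise cross terms vanish and $\E\norm{\bm{\zeta}_{(i),t}}^2=\tfrac{\step^2 p\,\sigmaij{i}{t}^2}{\nn}\sum_{k=1}^{K}\mathcal{A}^{K-k}=\tfrac{\step^2 p\,\sigmaij{i}{t}^2}{\nn}\cdot\tfrac{1-\mathcal{A}^{K}}{1-\mathcal{A}}$.

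\emph{Step 3: outer recursion and specialization.} Conditioning on the data of rounds $1,\dots,t-1$ (independent of round $t$), the conditional mean of the round-level identity removes $\bm{\zeta}_{(i),t}$ and replaces $M_{(i),t}$ by $(1-\step)^{K}\iMatrix{p}$, yielding an affine recursion for $\E[\deltat{t}]$ whose $\F$-solution is the $K<\infty$ analog of $\bm{g}_l^{K=1}$. Expanding $\E\norm{\deltat{t}}^2$, the cross-client ($i\ne j$) terms factor into products of means by independence across clients, the noise--$M$ and noise--$\gamma$ cross terms vanish, and each diagonal term is evaluated via the Step-2 moments together with the (now known) vector $\E[\deltat{t-1}]$. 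Collecting the coefficient of $\E\norm{\deltat{t-1}}^2$ as $\mathcal{J}_t$ and all remaining terms --- noise, heterogeneity, and the inner products $\gammait{i}{t}\T\E[\deltat{t-1}]$ --- as $\mathcal{Q}_t$ gives $\E\norm{\deltat{t}}^2=\mathcal{J}_t\E\norm{\deltat{t-1}}^2+\mathcal{Q}_t$, and \cref{eq.multi_K} follows from $\F$. In the simple case \cref{eq.simple_case1,eq.simple_case4,eq.simple_case5} with $\gammaSquareAvg=0$ every $\gammait{i}{t}$ vanishes, so $\mathcal{J}_t$ collapses to $\mathcal{J}=\tfrac{\mathcal{A}^K+(m-1)(1-\alpha)^{2K}}{m}$ (the $m$ diagonal contributions $\mathcal{A}^{K}$ plus the $m(m-1)$ cross contributions $(1-\alpha)^{2K}$, all divided by $m^2$) and $\mathcal{Q}_t\equiv\tfrac{\alpha^2 p\sigma^2}{m\tilde n}\cdot\tfrac{1-\mathcal{A}^{K}}{1-\mathcal{A}}$; the resulting constant-coefficient recursion sums geometrically to \cref{eq.simple_Kc}. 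The main obstacle is the bookkeeping in the fully heterogeneous, non-stationary case: tracking, in the second-moment expansion, all cross terms induced by unequal $\nij{i}{t}$, $\stepsizeit{i}{t}$, $\sigmaij{i}{t}$, and $\gammait{i}{t}$ and their coupling to the vector $\E[\deltat{t-1}]$, and repackaging them into the closed forms for $\mathcal{J}_l$ and $\mathcal{Q}_l$; a secondary technical point is justifying that independence across the $K$ disjoint batches lets every nested expectation in $\E[M_{(i),t}\T M_{(i),t}]$ and $\E\norm{\bm{\zeta}_{(i),t}}^2$ collapse to a scalar multiple of $\iMatrix{p}$.
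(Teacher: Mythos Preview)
Your proposal is correct and follows essentially the same approach as the paper: derive a one-step gradient recursion, use the Wishart moments $\E[\mathbf{X}\mathbf{X}\T]=\tilde n\iMatrix{p}$ and $\E[(\mathbf{X}\mathbf{X}\T)^2]=\tilde n(\tilde n+p+1)\iMatrix{p}$ together with batch/client independence to obtain an affine scalar recursion $\E\norm{\deltat{t}}^2=\mathcal{J}_t\E\norm{\deltat{t-1}}^2+\mathcal{Q}_t$, and then apply~$\F$. The only organizational difference is that you center on $\wijTrue{i}{t}$ and unroll the $K$ local steps into the matrix product $M_{(i),t}$ (computing $\E[M_{(i),t}]=(1-\step)^K\iMatrix{p}$ and $\E[M_{(i),t}\T M_{(i),t}]=\mathcal{A}_{(i),t}^K\iMatrix{p}$ by telescoping), whereas the paper centers on $\vwTrue$ and iterates the scalar recursion $\E_{1,\dots,k}\norm{\vwTrue-\wk{k}}^2=\mathcal{A}_{(i),t}\E_{1,\dots,k-1}\norm{\vwTrue-\wk{k-1}}^2+\mathcal{B}_{(i),t,k}'$ step by step; both computations are equivalent and yield the same $\mathcal{J}_t,\mathcal{Q}_t$.
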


The proof for \cref{th.multi_K} is provided in the supplemental material \cite[\cref{app.proof_multi_K}]{SupplementalMaterial2024}. 
Building upon the insights gained from \cref{th.multi_K}, we have the following discussions concerning the impact of the local update step $K$.

\smallskip
\textbf{Insight~3)~Effect of the local update step number $K$: The optimal choice of finite $K$ sometimes exists.} 
In \cref{eq.simple_Kc}, the local update step number $K$ together with several other factors simultaneously influence two error terms.
Therefore, the optimal choice of $K$ is dependent on other configurations, such as the number of communication round $t$, $\norm{\deltatGeneral{0}}^2$ (determined by the model initialization), and the noise denoted by $\sigma^2$. 
Through an analysis of how \cref{eq.simple_Kc} evolves with $K$, we establish the following proposition for the optimal choice of $K$:

\begin{proposition}\label{prop.opt_K}
The existence of an optimal choice of $K$ (defined by $K_{\text{opt}}$) for \cref{eq.simple_Kc} in different cases are as follows:

(1) A finite $K_{\text{opt}}$-value must exist  when $\tilde{n}$ is fixed (i.e., $n$ is determined by $K\tilde{n}$), $\alpha$ is sufficiently small\footnote{When $\alpha < \frac{2}{1 + \frac{p}{\tilde{n}}}$, we have $\mathcal{A}< 1$, and thus $\mathcal{J}<\frac{1 + (m-1)}{m}=1$.}, and $t\to \infty$.

(2) A finite $K_{\text{opt}}$-value does not exist (i.e., $K_{\text{opt}}=\infty$) when $\tilde{n}$ is fixed, $\alpha$ is sufficiently small, and $\sigma = 0$.

(3) When $n$ is fixed (i.e., $\tilde{n}$ is determined by $\lfloor n / K \rfloor$), $t<\infty$, $\alpha\leq 0.1$, $m\geq 3$, and $\sigma = 0$, if we neglect the difference between $\lfloor n / K \rfloor$ and $n/K$, then
\begin{align}
    \frac{n}{p+1}\left(\frac{2}{\alpha}-1\right) \leq {K}_{\text{opt}} \leq \frac{n}{p+1}\frac{(m-2)}{\alpha^3}.\label{eq.temp_092701}
\end{align}
\end{proposition}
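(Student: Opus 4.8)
The plan is to specialize the closed-form error \cref{eq.simple_Kc} under each of the three hypothesis sets and thereby reduce each claim to a one-variable study of a function of $K$, with $m,n,p,\alpha,\sigma,\norm{\deltatGeneral{0}}$ held fixed. Recall that under the simple case with $\gammaSquareAvg=0$, \cref{eq.simple_Kc} reads $\E\norm{\deltat{t}}^2=\mathcal{J}^t\norm{\deltatGeneral{0}}^2+\frac{1-\mathcal{J}^t}{1-\mathcal{J}}\cdot\frac{\alpha^2 p\sigma^2}{m\tilde{n}}\cdot\frac{1-\mathcal{A}^K}{1-\mathcal{A}}$, that ``$\alpha$ sufficiently small'' gives $0<\mathcal{A}<1$ and hence $0<\mathcal{J}<1$, and that $\mathcal{A}>(1-\alpha)^2$ always.

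For part (2), taking $\sigma=0$ kills the second summand, leaving $\E\norm{\deltat{t}}^2=\mathcal{J}(K)^t\norm{\deltatGeneral{0}}^2$. Since $\tilde{n}$ is fixed, $\mathcal{A}$ and $(1-\alpha)^2$ are constants in $(0,1)$, so $\mathcal{J}(K)=\tfrac{1}{m}\mathcal{A}^K+\tfrac{m-1}{m}(1-\alpha)^{2K}$ is a convex combination of two strictly decreasing geometric sequences, hence strictly decreasing with limit $0$; the error therefore strictly decreases in $K$ and its infimum $0$ is never attained, so $K_{\text{opt}}=\infty$. For part (1) (where necessarily $\sigma>0$, otherwise part (2) applies), $\tilde{n}$ is again fixed so $\mathcal{A}\in(0,1)$ is a constant; letting $t\to\infty$ forces $\mathcal{J}^t\to0$, and the error converges to $f(K)\defeq\frac{1}{1-\mathcal{J}(K)}\cdot\frac{\alpha^2 p\sigma^2}{m\tilde{n}}\cdot\frac{1-\mathcal{A}^K}{1-\mathcal{A}}$. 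As $K\to\infty$, $\mathcal{J}(K)\to0$ and $\mathcal{A}^K\to0$, so $f(K)\to\frac{\alpha^2 p\sigma^2}{m\tilde{n}(1-\mathcal{A})}$; on the other hand $\mathcal{J}(1)=\frac{\mathcal{A}+(m-1)(1-\alpha)^2}{m}<\mathcal{A}$ because $(1-\alpha)^2<\mathcal{A}$, so $f(1)=\frac{\alpha^2 p\sigma^2}{m\tilde{n}}\cdot\frac{1}{1-\mathcal{J}(1)}<\lim_{K\to\infty}f(K)$. Since $f$ converges to a finite limit strictly above $f(1)$, there is $N$ with $f(K)>f(1)$ for every integer $K>N$; hence $\min_{K\ge1}f(K)=\min_{1\le K\le N}f(K)$ is attained at a finite $K$, i.e., a finite $K_{\text{opt}}$ must exist.

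For part (3), $\sigma=0$ again removes the second summand, so the error is $\mathcal{J}(K)^t\norm{\deltatGeneral{0}}^2$, and (taking $\norm{\deltatGeneral{0}}\neq0$, the degenerate case being trivial, and $t\ge1$) minimizing it is the same as minimizing $\mathcal{J}(K)$. Neglecting the floor, $\tilde{n}=n/K$, so $\mathcal{A}(K)=(1-\alpha)^2+\frac{\alpha^2(p+1)}{n}K$ is affine and strictly increasing; solving $\mathcal{A}(K)=1$ gives precisely $K_1\defeq\frac{n}{p+1}(\frac2\alpha-1)$, while at $K_2\defeq\frac{n}{p+1}\cdot\frac{m-2}{\alpha^3}$ one has $\mathcal{A}(K_2)=(1-\alpha)^2+\frac{m-2}{\alpha}>10$. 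Treating $K$ as a positive real, I would differentiate
\begin{align*}
m\,\mathcal{J}'(K)=\mathcal{A}(K)^K\left(\ln\mathcal{A}(K)+\frac{\alpha^2(p+1)K}{n\,\mathcal{A}(K)}\right)+2(m-1)(1-\alpha)^{2K}\ln(1-\alpha),
\end{align*}
and then aim to show: (i) for $1\le K\le K_1$, where $\mathcal{A}(K)\le1$ and hence $\mathcal{A}(K)^K\le1$, the negative second term dominates --- bounding the parenthesized factor via $\ln u\le u-1$ together with the identity $\frac{\alpha^2(p+1)K}{n}=\mathcal{A}(K)-(1-\alpha)^2$ and invoking $\alpha\le0.1$, $m\ge3$, $-\ln(1-\alpha)\ge\alpha$ --- so that $\mathcal{J}'(K)<0$ and $\mathcal{J}$ is strictly decreasing on $[1,K_1]$, forcing $K_{\text{opt}}\ge K_1$; and (ii) for $K\ge K_2$, where $\mathcal{A}(K)\ge\mathcal{A}(K_2)>10$ so that $\ln\mathcal{A}(K)>2$ and $\mathcal{A}(K)^K$ grows geometrically, the positive first term dominates the bounded quantity $|2(m-1)(1-\alpha)^{2K}\ln(1-\alpha)|\le2(m-1)|\ln(1-\alpha)|$, so $\mathcal{J}'(K)>0$ and $\mathcal{J}$ is strictly increasing on $[K_2,\infty)$, forcing $K_{\text{opt}}\le K_2$. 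Since $\mathcal{J}$ is continuous and coercive ($\mathcal{A}(K)^K\to\infty$), its minimizer then lies in $[K_1,K_2]$, which is \cref{eq.temp_092701}.

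I expect the derivative-sign analysis of part (3)(i) to be the main obstacle: one must quantify the tug-of-war between the increasing factor $\mathcal{A}(K)^K$ --- controlled by $1$ only while $\mathcal{A}(K)\le1$ --- and the slowly decreasing factor $(1-\alpha)^{2K}$ sharply enough to obtain strict monotonicity all the way up to $K=K_1$ rather than to a looser threshold, and this is exactly where the numerical hypotheses $\alpha\le0.1$ and $m\ge3$ enter, through elementary estimates on $-\ln(1-\alpha)$ and on $\tfrac{m-2}{m}$. A secondary point is to justify the ``neglect the floor'' reduction $\tilde{n}=n/K$ and to keep $K$ in the range where $\tilde{n}\ge1$ so that \cref{eq.simple_Kc} stays well defined.
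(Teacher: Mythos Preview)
Your proposal follows essentially the same route as the paper's proof sketch: for (2) you both observe that $\mathcal{J}(K)$ is strictly decreasing in $K$; for (1) you both reduce to the ratio $\dfrac{1-\mathcal{A}^K}{1-\mathcal{J}(K)}$; and for (3) you both differentiate $m\mathcal{J}(K)=\mathcal{A}(K)^K+(m-1)(1-\alpha)^{2K}$ in the continuous variable $K$ and argue by sign of the derivative.

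Two minor differences are worth noting. In part (1), the paper's argument is a bit slicker than yours: rather than comparing only $f(1)$ to $\lim_{K\to\infty}f(K)$, it observes that $\mathcal{J}(K)<\mathcal{A}^K$ for \emph{every} finite $K$ (since $(1-\alpha)^{2K}<\mathcal{A}^K$), so $\dfrac{1-\mathcal{A}^K}{1-\mathcal{J}(K)}<1$ for every finite $K$ while the limit is $1$; this immediately rules out $K=\infty$ as a minimizer. In part (3), your derivative formula is actually the correct one (the paper's displayed $\partial f/\partial K$ drops the $K\,\mathcal{A}'(K)/\mathcal{A}(K)$ term from the product-rule differentiation of $K\ln\mathcal{A}(K)$), and you are right to flag (3)(i) as the delicate step: with the full derivative, the parenthesized factor $\ln\mathcal{A}+K\mathcal{A}'/\mathcal{A}$ need not be negative when $\mathcal{A}(K)<1$, so one genuinely has to play the two terms off against each other using $\alpha\le 0.1$ and $m\ge 3$ --- the paper's sketch glosses over this by working with the truncated derivative. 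Your plan (bound the bracket via $\ln u\le u-1$ and the identity $K\alpha^2(p+1)/n=\mathcal{A}-(1-\alpha)^2$, then compare to $2(m-1)(1-\alpha)^{2K}|\ln(1-\alpha)|$) is the right idea, but be aware that the crude bound $\mathcal{A}^K\le 1$ alone will not beat the decaying factor $(1-\alpha)^{2K}$ for $K$ near $K_1$; you will need to also control $\mathcal{A}(K)^K/(1-\alpha)^{2K}=(1+\tfrac{bK}{(1-\alpha)^2})^K$ on that range.
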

In \cref{prop.opt_K}, we show that the optimal and finite $K$-value only exists in some cases, whose value depends on other parameters in one specific problem instance. 
For example, the upper bound of ${K}_{\text{opt}}$ in Eq.~\eqref{eq.temp_092701} indicates that \textbf{the optimal $K$ may increase when the number of agents $m$ increases}.
This discovery offers a theoretical explanation of the experimental controversy, wherein switching to local update steps yields divergent outcomes for various tasks; some exhibit improved performance, while others do not~\citep{lin2019don,ortiz2021trade,gu2022and}.
Proof of \cref{prop.opt_K} is provided in \cref{app.proof_opt_K}.

\smallskip
\textbf{Experiments.}
Following a similar setting in \cref{fig.change_t}, we plot the model error against the local steps $K$ when $\nij{i}{t}$ is fixed in \cref{fig.change_K}. 
These three curves in \cref{fig.change_K} correspond to different values of $m$. We can see that each of the three curves in \cref{fig.change_K} has a minimum. The lowest points of the three curves for cases $m=3,10,25$ are located at $K=15,19,27$ (i.e., $K_{\text{opt}}$), respectively. 
This phenomenon supports our insights that the optimal $K$ only exists ``sometimes'' and may increase w.r.t. $m$.



\subsection{The \texorpdfstring{$K=\infty$}{k=infty}  Case (Convergence in Local  Update)}\label{subsubsec.converge_K}

We define the following short-hand notations:
\begin{align*}
    &\bm{g}_{l}^{K=\infty} \defeq  \F\left(l, \deltatGeneral{0}, \seq{t}{A_t},\seq{t}{\bm{b}_t}\right),\numberthis \label{eq.def_F_inf}\\
    &A_t \defeq \frac{1}{\sum_{i'\in [m]}\nij{i'}{t}}\sum_{i'\in[m]}\nij{i'}{t} \left(1-\frac{\nij{i'}{t}}{p}\right) ,\numberthis \label{eq.def_At}\\
    &\bm{b}_t \defeq \frac{1}{\sum_{i'\in [m]}\nij{i'}{t}}\sum_{i'\in[m]}\nij{i'}{t}\cdot \frac{\nij{i'}{t}}{p}\gammait{i'}{t} .\numberthis \label{eq.def_bt}\\
    &C_t \defeq \mySumSquareP{\sum_{i=1}^m \left(\nij{i}{t}^2 \left(1-\frac{\nij{i}{t}}{p}\right)  \right)} \\
    &\qquad + \mySumSquareP{\sum_{i\neq j} \nij{i}{t}\nij{j}{t}\left(1-\frac{\nij{i}{t}}{p}\right)\left(1-\frac{\nij{j}{t}}{p}\right)},\numberthis \label{eq.def_Ct}
\end{align*}
\begin{align*}
    &D_t \defeq \mySumSquareP{\sum_{i\in [m]}\frac{\nij{i}{t}^3 \sigmaij{i}{t}^2}{p-\nij{i}{t}-1}+\frac{\nij{i}{t}^3}{p}\norm{\gammait{i}{t}}^2}\\
    &+ \mySumSquareP{\sum_{i\in [m]}\sum_{j\in [m]\setminus\{i\}} \left(\frac{\nij{i}{t}^2\nij{j}{t}^2}{p^2} \gammait{i}{t}\T\gammait{j}{t} \right) } + \\
    &\mySumSquareP{\sum_{i\in [m]}\sum_{j\in [m]\setminus\{i\}} 2 \frac{\nij{j}{t}^2}{p}\nij{i}{t}\left(1 - \frac{\nij{i}{t}}{p}\right)\gammait{j}{t}\T \bm{g}_{t-1}^{K=\infty}}.\numberthis \label{eq.def_Dt}
\end{align*}


\begin{theorem}\label[theorem]{th.main}
In the over-parameterized (OP) regime, i.e., $p>\max \nij{i}{t}+1$, it holds that
\begin{align}
    \E \norm{\deltawt{t}}^2 \!\!\!=\!\! \F(t, \norm{\deltatGeneral{0}}^2\!\!\!, \seq{l}{C_l}, \seq{l}{D_l}),\forall t\in [T].\label{eq.inf_K_main}
\end{align}
In the under-parameterized (UP) regime, i.e., $p<\min \nij{i}{t}-1$, it holds that
\begin{align}\label{eq.underparameterized}
    \E \norm{\deltawt{t}}^2 \!\!=\! \norm{\mySumP{\nij{i}{t}\gammait{i}{t}}}^2 \!\!\!+\! \mySumSquarePP{\frac{\nij{i}{t}^2 p \sigmaij{i}{t}^2}{\nij{i}{t}-p-1}}.
\end{align}
For the simple case described by \cref{eq.simple_case1,eq.simple_case4,eq.simple_case5}, it holds that
\begin{align}\label{eq.main_thm_simple}
    \E \norm{\deltawt{t}}^2 = \begin{cases}
    C^{t} \norm{\deltatGeneral{0}}^2+ \frac{1-C^t}{1-C} D & \text{ if OP},\\
    \frac{p\sigma^2}{m(n-p-1)}&\text{ if UP},
    \end{cases}
\end{align}
where
\begin{align}
    C &\defeq  \frac{1}{m}\left(1-\frac{n}{p}\right)+\frac{m-1}{m}\left(1-\frac{n}{p}\right)^2 < 1,\\
    D &\defeq \frac{n\sigma^2}{m(p-n-1)} + \frac{n}{p}\gammaSquareAvg.\label{eq.temp_092704}
\end{align}
\end{theorem}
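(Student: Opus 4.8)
The plan is to convert the FedAvg averaging step \cref{eq.FedAvg} into a scalar recursion for $\E\norm{\deltawt{t}}^2$ and then solve it in closed form using the general-term formula $\F$ of \cref{eq.def_F_formula}. First I would write down the per-client update in the over-parameterized (OP) regime. Since $p>\max_i\nij{i}{t}+1$, the column space of $\XX{i}{t}$ is almost surely $\nij{i}{t}$-dimensional and the constrained problem \cref{eq:op} is an orthogonal projection: with $\Pij{i}{t}\defeq \XX{i}{t}(\XX{i}{t}\T\XX{i}{t})^{-1}\XX{i}{t}\T$, the minimizer is $\wij{i}{t}=\wavg{t-1}+\Pij{i}{t}(\wijTrue{i}{t}-\wavg{t-1})+\XX{i}{t}(\XX{i}{t}\T\XX{i}{t})^{-1}\ee{i}{t}$. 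Substituting $\wijTrue{i}{t}=\vwTrue-\gammait{i}{t}$ and $\wavg{t-1}=\vwTrue-\deltawt{t-1}$ and rearranging gives $\vwTrue-\wij{i}{t}=(\iMatrix{p}-\Pij{i}{t})\deltawt{t-1}+\Pij{i}{t}\gammait{i}{t}-\XX{i}{t}(\XX{i}{t}\T\XX{i}{t})^{-1}\ee{i}{t}$; averaging with the weights in \cref{eq.FedAvg} expresses $\deltawt{t}$ as an affine function of $\deltawt{t-1}$ plus a heterogeneity term and a zero-mean noise term.

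Next I would take expectations. By \cref{as.Gaussian} the round-$t$ quantities $(\XX{i}{t},\ee{i}{t})_{i\in[m]}$ are mutually independent across $i$ and independent of everything from rounds $1,\dots,t-1$, so I can first condition on rounds $1,\dots,t-1$ and apply the standard Gaussian moment identities: $\E[\Pij{i}{t}]=\tfrac{\nij{i}{t}}{p}\iMatrix{p}$ (rotational invariance plus a trace count), $\E[(\iMatrix{p}-\Pij{i}{t})^2]=\E[\iMatrix{p}-\Pij{i}{t}]=(1-\tfrac{\nij{i}{t}}{p})\iMatrix{p}$ (idempotence), $\E[\ee{i}{t}\mid \XX{i}{t}]=\bm{0}$, and the inverse-Wishart identity $\E[\operatorname{tr}((\XX{i}{t}\T\XX{i}{t})^{-1})]=\tfrac{\nij{i}{t}}{p-\nij{i}{t}-1}$, which is finite exactly because $p>\max_i\nij{i}{t}+1$. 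Expanding $\norm{\deltawt{t}}^2$ into its $m^2$ client-pair contributions, the purely quadratic part collects into the coefficient $C_t$ of \cref{eq.def_Ct} multiplying $\norm{\deltawt{t-1}}^2$, the diagonal noise/heterogeneity pieces give the first line of $D_t$ in \cref{eq.def_Dt}, and the off-diagonal $\gammait{i}{t}\T\gammait{j}{t}$ pieces give its second line.

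The only part that does not close is the cross-client interaction of $(\iMatrix{p}-\Pij{i}{t})\deltawt{t-1}$ with $\Pij{j}{t}\gammait{j}{t}$ for $i\neq j$: it is affine (not quadratic) in $\deltawt{t-1}$, so after taking the full expectation it involves $\E[\deltawt{t-1}]$. I would therefore track the first moment by an auxiliary recursion: taking expectations of the affine update and using only $\E[\iMatrix{p}-\Pij{i}{t}]=(1-\tfrac{\nij{i}{t}}{p})\iMatrix{p}$ and $\E[\Pij{i}{t}]=\tfrac{\nij{i}{t}}{p}\iMatrix{p}$ yields $\E[\deltawt{t}]=A_t\,\E[\deltawt{t-1}]+\bm{b}_t$ with $A_t,\bm{b}_t$ as in \cref{eq.def_At,eq.def_bt}; since $\E[\deltawt{0}]=\deltatGeneral{0}$, \cref{eq.def_F_formula} gives $\E[\deltawt{t}]=\bm{g}_t^{K=\infty}$ exactly as in \cref{eq.def_F_inf}. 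Feeding $\bm{g}_{t-1}^{K=\infty}$ back produces the third line of $D_t$, and the second moment then obeys $\E\norm{\deltawt{t}}^2=C_t\,\E\norm{\deltawt{t-1}}^2+D_t$ with $\E\norm{\deltawt{0}}^2=\norm{\deltatGeneral{0}}^2$; a final application of $\F$ gives \cref{eq.inf_K_main}. For the under-parameterized (UP) regime, $p<\min_i\nij{i}{t}-1$ forces $\XX{i}{t}\T$ to have full column rank a.s., so the unique loss minimizer is the least-squares estimate $\wij{i}{t}=\wijTrue{i}{t}+(\XX{i}{t}\XX{i}{t}\T)^{-1}\XX{i}{t}\ee{i}{t}$, which is \emph{independent of} $\wavg{t-1}$ and hence carries no round-to-round dependence; then $\vwTrue-\wij{i}{t}=\gammait{i}{t}-(\XX{i}{t}\XX{i}{t}\T)^{-1}\XX{i}{t}\ee{i}{t}$, and averaging, squaring, and using independence and zero mean of the noise across clients together with $\E[\operatorname{tr}((\XX{i}{t}\XX{i}{t}\T)^{-1})]=\tfrac{p}{\nij{i}{t}-p-1}$ (finite since $\nij{i}{t}>p+1$) gives \cref{eq.underparameterized} directly. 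Finally, \cref{eq.main_thm_simple} follows by substituting \cref{eq.simple_case1,eq.simple_case4,eq.simple_case5}: under $\sum_{j}\gammait{j}{t}=\bm{0}$ the term $\bm{b}_t$ and both cross-client sums in $D_t$ vanish, $A_t,C_t,D_t$ reduce to the constants $1-\tfrac np$, $C$, $D$, and $\F$ collapses to $C^t\norm{\deltatGeneral{0}}^2+\tfrac{1-C^t}{1-C}D$; the bound $C<1$ holds because $C$ is a convex combination of $1-\tfrac np\in(0,1)$ and $(1-\tfrac np)^2\in(0,1)$.

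The step I expect to be the main obstacle is the expansion in the second paragraph: correctly partitioning the $m^2$ client-pair terms of $\E\norm{\deltawt{t}}^2$ into the purely quadratic contribution (which assembles $C_t$), the affine contribution (which forces the coupled first-moment recursion for $\bm{g}_t^{K=\infty}$), and the constant contribution (data noise plus the $\gammait{i}{t}\T\gammait{j}{t}$ terms), and matching each to the right Gaussian moment identity. The computation stays tractable only because $(\iMatrix{p}-\Pij{i}{t})\Pij{i}{t}=\bm{0}$ kills the diagonal $\deltawt{t-1}$--$\gammait{i}{t}$ cross term and because distinct clients and distinct rounds are independent, so no second moments of $\Pij{i}{t}$ against itself (hence no fourth moments of $\XX{i}{t}$) ever enter.
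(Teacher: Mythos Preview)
Your proposal is correct and follows essentially the same route as the paper's proof: derive the projection form of $\wij{i}{t}$, expand $\norm{\deltawt{t}}^2$ into the $m^2$ client-pair contributions, evaluate each with the Gaussian identities you list, run the auxiliary first-moment recursion to obtain $\bm{g}_t^{K=\infty}$, and then solve the resulting scalar recursion via $\F$; the under-parameterized case is handled identically. The only cosmetic difference is that the paper isolates the identity $\E[\Pij{i}{t}]=\tfrac{\nij{i}{t}}{p}\iMatrix{p}$ as a separate lemma (\cref{le.cross_term_exact_value}) proved by an explicit reflection argument, whereas you invoke it directly from rotational invariance---your formulation is the cleaner one.

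One small slip in your simple-case reduction: under \cref{eq.simple_case4} the off-diagonal sum $\sum_{i}\sum_{j\neq i}\gammait{i}{t}\T\gammait{j}{t}$ does \emph{not} vanish; using $\sum_j\gammait{j}{t}=\bm 0$ it equals $-\sum_i\norm{\gammait{i}{t}}^2=-m\,\gammaSquareAvg$, so the second line of $D_t$ still contributes and must be combined with the diagonal $\tfrac{\nij{i}{t}^3}{p}\norm{\gammait{i}{t}}^2$ piece before you read off the constant $D$. This is purely an algebraic bookkeeping detail and does not affect the method.
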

We provide a proof sketch of \cref{th.main} in \cref{sec.proof_sketch}. 
The complete proof is in the supplemental material \cite[\cref{app.proof_main}]{SupplementalMaterial2024}.

\smallskip
\textbf{Insight~4)~Benign overfitting exists in FL, and the ``null risk'' can be alleviated by using more communications rounds.} In the over-parameterized case of \cref{eq.main_thm_simple}, the term $D$ decreases when $p$ increases. Thus, when the term $D$ dominates (e.g., when noise and/or heterogeneity is large, or $t$ is large), the generalization performance of FL in this case will benefit from more parameters when overfitted. This validates the ``double-descent'' or benign overfitting phenomenon in the literature of the classical (single-task single-agent) linear regression (e.g. \citet{belkin2020two}). For the comparable Gaussian models we used, the expectation of the model error of such a classical (single-task single-agent) linear regression is
\begin{align}\label{eq.overfit_classical}
    (1-\frac{n}{p})\norm{\deltatGeneral{0}}^2+\frac{n\sigma^2}{p-n-1}.
\end{align}
By \cref{eq.overfit_classical} and related literature (e.g., \citet{ju2020overfitting}), the classical linear regression suffers from the ``null risk'' (i.e., converges to the initial error) when $p\to \infty$. 
However, for the FL result in \cref{eq.main_thm_simple}, we can see that the ``null risk'' term $\norm{\deltatGeneral{0}}^2$ can be alleviated by the coefficient $C^t$, which approaches zero when $t\to\infty$. 
In other words, for fixed $n$, when $p\to \infty$, as long as we let $t\to \infty$ in a faster speed (e.g., $t=p\log p$, proved in \cref{le.limit_t} in supplemental material \cite[\cref{app.useful_lemmas}]{SupplementalMaterial2024}), then the null risk term $C^t \norm{\deltatGeneral{0}}^2\to 0$,  which implies that using more communication rounds in FL (i.e., larger $t$) mitigates the null risk, thus ``enhancing'' the benefits of overfitting.

\begin{figure}
    \centering
    \includegraphics[width=0.6\textwidth]{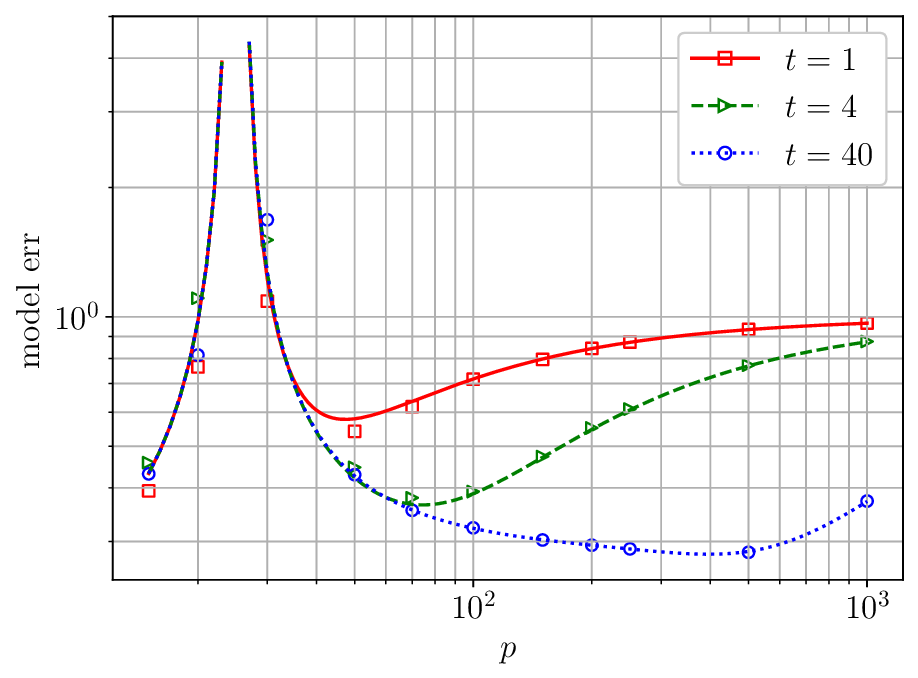}
    \caption{Experimental and theoretical values of the model error w.r.t. $p$ where $m=3$, $s=5$, $\nij{i}{t}=25$, $\norm{\deltatGeneral{0}}=1$, $\norm{\gammait{i}{t}}=0.5$, and $\sigmaij{i}{t} = 0.7$ for all $i,t$. Each marker is the experimental value by averaging over 20 simulation runs. The curves are drawn purely by the theoretical values from \cref{th.main}.}
    \label{fig.change_p}
\end{figure}

\smallskip
\textbf{Experiments.}
In \cref{fig.change_p}, we present a plot of model error against $p$ in both the underparameterized regime ($p< n=25$) and overparameterized regime ($p> n = 25$) for three cases with $t=1$, $t=4$, and $t=40$. 
The curves represent theoretical values derived from \cref{th.main}, while each marker signifies the average of 20 simulation trials.
It is observed that all three curves exhibit a decreasing trend at the initial phase of the overparameterized regime, confirming the presence of benign overfitting. 
Additionally, when comparing these three cases, the curve for $t=40$ (indicated by a blue dotted line with markers " $\circ$ ") has a more substantial and broader descent. 
This observation confirms our insight that a larger $t$-value enhances the benefits of overfitting in FL.%

\section{Proof Sketch of Proposition~\ref{prop.opt_K}}\label{app.proof_opt_K}
(1) Since $\tilde{n}$ is fixed, then $\mathcal{A}$ does not change with $K$. When $t\to \infty$, the value of \cref{eq.simple_Kc} becomes
\begin{align}\label{eq.temp_092601}
    \frac{1}{1-\mathcal{J}} \frac{\alpha^2 p \sigma^2}{mn}\cdot \frac{1-\mathcal{A}^K}{1-\mathcal{A}}.
\end{align}
The only component related to $K$ in \cref{eq.temp_092601} is $\frac{1-\mathcal{A}^K}{1-J}$, thus $K_{\text{opt}}=\argmin_K \frac{1-\mathcal{A}^K}{1-J}$. Notice that for any finite $K$, we must have
\begin{align*}
    \mathcal{A}^K = \left(1-\alpha)^2 + \frac{\alpha^2 (p+1)}{\tilde{n}}\right)^K > (1-\alpha)^{2K}.
\end{align*}
Thus, we have
\begin{align*}
    \mathcal{J}= \frac{1}{m} \mathcal{A}^K + \frac{m-1}{m}(1-\alpha)^{2K}< \mathcal{A}^K,
\end{align*}
which implies that $\frac{1-\mathcal{A}^K}{1-\mathcal{J}} < 1$ for any finite $K$. Meanwhile, $\lim\limits_{K\to \infty}\frac{1-\mathcal{A}^K}{1-\mathcal{J}}= 1$. Thus, $K_{\text{opt}}$ should be finite.

(2) Since $\tilde{n}$ is fixed, then $\mathcal{A}$ does not change with $K$. When $\sigma=0$, \cref{eq.simple_Kc} becomes $\mathcal{J}^t \norm{\deltatGeneral{0}}^2$. Notice that $\mathcal{J}$ is strictly monotone decreasing w.r.t. $K$. Therefore, $K_{\text{opt}}=\infty$.

(3) Since we use $n / K$ to replace $\lfloor n / K \rfloor$, we have ${K}_{\text{opt}}=\argmin_{K}f(K)$ where
\begin{align*}
    f(K)\defeq \left((1-\alpha)^2 + K\frac{\alpha^2 (p+1)}{n}\right)^K + (m-1)(1-\alpha)^{2K}.
\end{align*}
Calculating the derivative, we have $\frac{\partial f(K)}{\partial K} =$
\begin{align*}
    & \frac{\alpha^2 (p+1)}{n} \left((1-\alpha)^2 + K\frac{\alpha^2 (p+1)}{n}\right)^K \ln \left((1-\alpha)^2 + K\frac{\alpha^2 (p+1)}{n}\right) \\
    &+ (m-1)(1-\alpha)^{2K}\ln ((1-\alpha)^2).\numberthis \label{eq.temp_092602}
\end{align*}
When $\left((1-\alpha)^2 + K\frac{\alpha^2 (p+1)}{n}\right) < 1$, we have $\frac{\partial f(K)}{\partial K} < 0$.

For any $\delta>0$, when
\begin{align*}
    &\left((1-\alpha)^2 + K\frac{\alpha^2 (p+1)}{n}\right) > 1+\delta,\\
    &\frac{\alpha^2 (p+1)}{n} (1+K\delta) \ln (1+\delta) > (m-1)\ln \frac{1}{(1-\alpha)^2},
\end{align*}
we have $\text{\cref{eq.temp_092602}}>0$. (Notice that we utilize the face that $(1-\alpha)^{2K}<1$ and $(1+\delta)^K \geq 1+K\delta$.) Solving those inequalities by further letting $\ln (1+\delta) = \ln \frac{1}{(1-\alpha)^2}$, we thus have
\begin{align*}
    &\frac{n}{(p+1)}\left(\frac{2}{\alpha}-1\right)\leq {K}_{\text{opt}} \\
    &\leq \frac{n}{\alpha^2(p+1)}\cdot \max\left\{(2\alpha-\alpha^2)\left(1+\frac{1}{(1-\alpha)^2}\right),\ (m-2)\frac{(1-\alpha)^2}{2\alpha - \alpha^2}\right\}.
\end{align*}
When $\alpha \leq 0.1$ and $m\geq 3$, we can further relax the above inequality as
\begin{align*}
    \frac{n}{p+1}\left(\frac{2}{\alpha}-1\right) \leq {K}_{\text{opt}} \leq \frac{n}{p+1}\frac{(m-2)}{\alpha^3}.
\end{align*}

 


\section{Proof Sketch of Theorem~\ref{th.main}}\label{sec.proof_sketch}
We provide a sketched proof of \cref{eq.inf_K_main} here.
For any $i\in [m]$, we define $\Pij{i}{t}\in \mathds{R}^{p\times p}$ as
\begin{align}\label{eq.def_Pij_another}
    \Pij{i}{t}\defeq \XX{i}{t}\left(\XX{i}{t}\T\XX{i}{t}\right)^{-1} \XX{i}{t}\T.
\end{align}
(We know $\Pij{i}{t}$ is an orthogonal projection since $\Pij{i}{t}\Pij{i}{t}=\Pij{i}{t}$ and $\Pij{i}{t}\T = \Pij{i}{t}$.)
In the overparameterized situation, after each agent trains to converge, we have
\begin{align}
    \wij{i}{t} =& \Pij{i}{t}\wijTrue{i}{t}+ (\iMatrix{p} - \Pij{i}{t}) \wavg{t-1} \nonumber\\
    & + \XX{i}{t}\left(\XX{i}{t}\T\XX{i}{t}\right)^{-1} \ee{i}{t}.\label{eq.temp_082401_another}
\end{align}



We thus have
\begin{align*}
    \deltawt{t} =&\vwTrue-\wavg{t}\ \text{ (by \cref{eq.def_delta})}\\
    =&\mySum\nij{i}{t}\left(\Pij{i}{t} \gammait{i}{t} + (\iMatrix{p}\right.\\
    &\left.-\Pij{i}{t})\deltawt{t-1}- \XX{i}{t}\left(\XX{i}{t}\T\XX{i}{t}\right)^{-1} \ee{i}{t}\right).\numberthis \label{eq.delta_t_expression_another}
\end{align*}
Thus, we can write $\E_t \norm{\deltawt{t}}^2$ into the inner product between the terms in \cref{eq.delta_t_expression_another}.
The key part of the proof is to calculate those inner product terms. 
The terms that involve only one agent can be calculated using the known results in the literature, e.g.,
\begin{align*}
    &\E_t\norm{\Pij{i}{t}\gammait{i}{t}}^2 = \frac{\nij{i}{t}}{p}\norm{\gammait{i}{t}}^2.\numberthis \label{eq.temp_041401}
\end{align*}
The remaining terms in $\E_t \norm{\deltawt{t}}^2$ involve different agents $i\neq j$, which are unique to FL and not seen in the literature. The key step is to prove
\begin{align}
    \E_{\Pij{i}{t}} \left[\Pij{i}{t}\deltawt{t-1}\right] = \frac{\nij{i}{t}}{p} \deltawt{t-1}.\label{eq.key_step}
\end{align}
We provide an intuition of \cref{eq.key_step} along with a geometric interpretation in \cref{fig.geometric} at the end of this section. By using \cref{eq.key_step}, the terms involving different agents $i\neq j$ can be calculated, e.g.,
\begin{align*}
    \E_t \left[\gammait{j}{t}\T \Pij{j}{t}\Pij{i}{t}\gammait{i}{t}\right] = \frac{\nij{i}{t}\nij{j}{t}}{p^2} \gammait{j}{t}\T \gammait{i}{t}.
\end{align*}

With the above equations, we thus have
\begin{align}
    \E \norm{\deltawt{t}}^2 = C_t \cdot  \E\norm{\deltawt{t-1}}^2 + D_t,\label{eq.temp_083101_another}
\end{align}
where $C_t$ denotes the coefficient of $\norm{\deltawt{t-1}}^2$ and $D_t$ denotes the remaining parts. The specific expressions of $C_t$ and $D_t$ are in \cref{eq.def_Ct,eq.def_Dt}.
Applying \cref{eq.temp_083101_another} recursively, we thus have \cref{eq.inf_K_main}.

\begin{figure}
    \centering
    \includegraphics[width=0.6\textwidth]{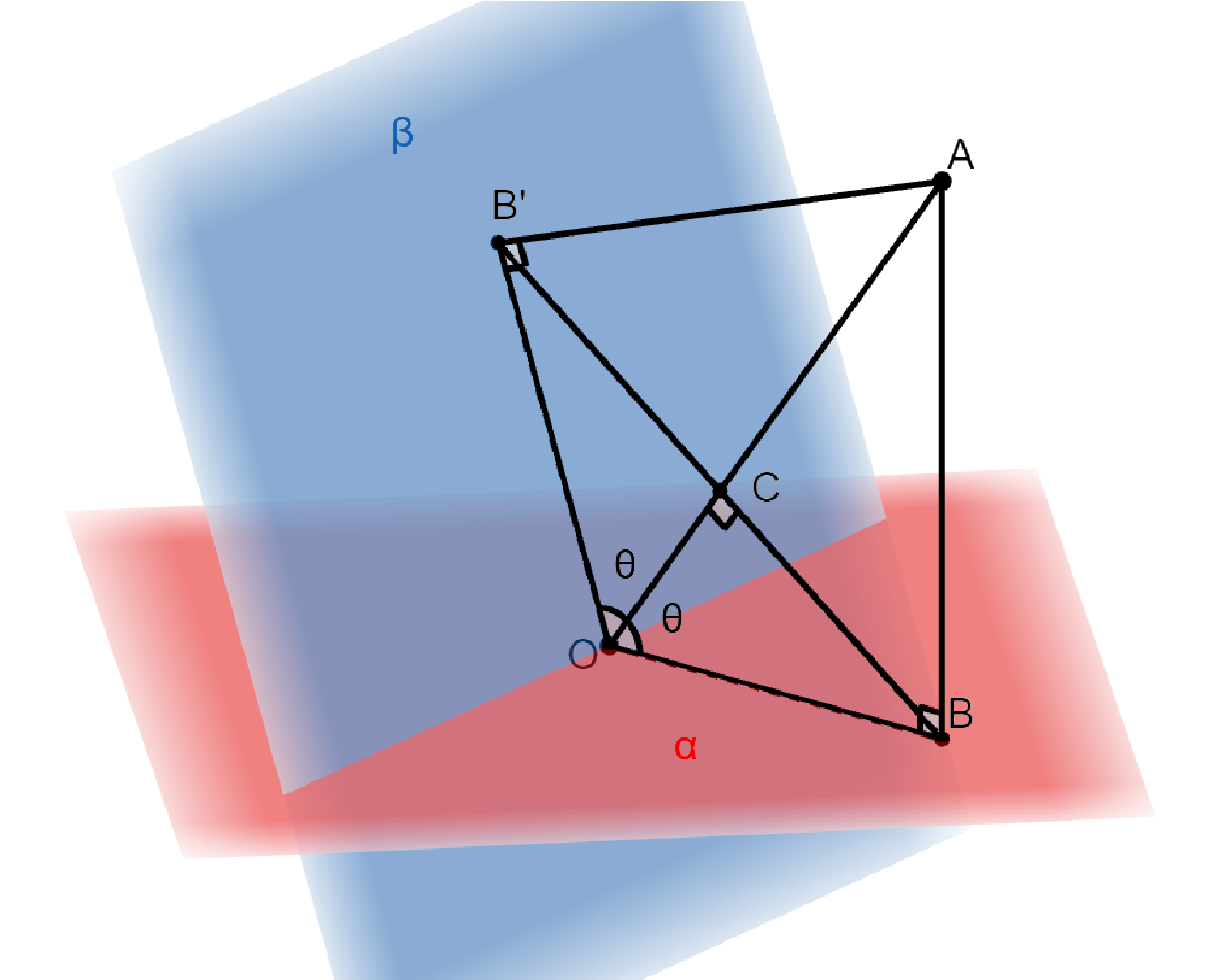}
    \caption{Geometric interpretation of \cref{eq.key_step}.}
    \label{fig.geometric}
\end{figure}

\emph{Intuition of \cref{eq.key_step}:} We use \cref{fig.geometric} to help illustrating the intuition. In \cref{fig.geometric}, the vector $\overrightarrow{OA}$ denotes $\deltawt{t-1}$, the plane $\alpha$ denotes the space spanned by the columns of $\XX{i}{t}$. Notice that $\Pij{i}{t}\deltawt{t-1}$ represents result of projecting $\deltawt{t-1}$ to the column space of $\XX{i}{t}$, i.e., the vector $\overrightarrow{OB}$ in \cref{fig.geometric}. Therefore, in \cref{fig.geometric}, calculating $\E\limits_{\Pij{i}{t}}\Pij{i}{t}\deltawt{t-1}$ means calculating the average of $\overrightarrow{OB}$ when the hyper-plane $\alpha$ rotating around the point $O$. Notice that $\overrightarrow{OB}=\overrightarrow{OC}+\overrightarrow{CB}$ where $\overrightarrow{OC}$ and $\overrightarrow{CB}$ are the parallel and perpendicular components of $\overrightarrow{OB}$ w.r.t. $\overrightarrow{OA}$, respectively. Because of the rotational symmetry of the hyper-plane $\alpha$ (due to the rotational symmetry of each column of $\XX{i}{t}$), we know that all the perpendicular components are cancelled out while only the parallel components remain in the averaging process. In other words, for any hyper-plane $\alpha$, there exists a symmetrical (w.r.t. $\overrightarrow{OA}$) hyper-plane $\beta$ with the same probability density such that the projection of $\overrightarrow{OA}$ to the hyper-plane $\beta$, named $\overrightarrow{OB'}$, has the same parallel component $\overrightarrow{OC}$ but the opposite perpendicular component $\overrightarrow{CB'}=-\overrightarrow{CB}$. Thus, we only need to calculate the average of the parallel component $\overrightarrow{OC}$, whose length equals $\cos \theta \abs{\overrightarrow{OB}}$, where $\theta=\angle AOB$ is defined as the angle between $\deltawt{t-1}$ and $\Pij{i}{t}\deltawt{t-1}$ (i.e., the angle between $\deltawt{t-1}$ and the hyperplane spanned by the columns of $\XX{i}{t}$ as
\begin{align}
    \theta \defeq \arccos \frac{\Pij{i}{t}\deltawt{t-1}}{\norm{\deltawt{t-1}}}.\label{eq.temp_072102}
\end{align}
Also notice that $\abs{\overrightarrow{OB}}=\cos \theta \abs{\overrightarrow{OA}}$. Thus, the length of the parallel component equals $\abs{\overrightarrow{OC}}=\cos^2\theta \abs{\overrightarrow{OA}}$. Therefore, we have $\E \overrightarrow{OC} = \E \cos^2 \theta \overrightarrow{OA}=\frac{\nij{i}{t}}{p} \deltawt{t-1}$. The last equation uses a known result in literature just as \cref{eq.temp_041401}.

\section{Conclusion} \label{sec:conclusion}


In this paper, we have precisely quantified the influence of data heterogeneity and the local update process on the generalization performance of FedAvg-type algorithms. 
Specifically, we undertook a thorough theoretical examination of FL's generalization performance utilizing a linear model, which yields closed-form expressions for the model error.
Our analysis rigorously assesses the impact of local update steps (represented by $K$) across three distinct settings ($K=1$, $K<\infty$, and $K=\infty$), elucidating how generalization performance evolves with the progression of rounds, denoted as $t$. 
Additionally, our investigation yields a comprehensive understanding of how various configurations, including the number of model parameters $p$, the number of training samples $n$, the local steps $K$, and the total communication round $t$, contribute to the overall FL generalization performance. 
This, in turn, unveils new insights, such as the phenomenon of benign overfitting, optimal local steps, and the impact of a good model initialization, with practical implications for the implementation of FL.

\section{Acknowledgement}

This work has been supported in part by NSF grants NSF AI Institute (AI-EDGE) CNS-2112471, CNS-2312836, CNS-2106933, CNS-2106932, CNS-2312836, CNS-1955535, CNS-1901057, ECCS-2113860 and 2324052, CAREER CNS-2110259, and ECCS-2331104, by Army Research Office under Grant W911NF-21-1-0244 and was sponsored by the Army Research Laboratory and was accomplished under Cooperative Agreement Number W911NF-23-2-0225, by DARPA Grant D24AP00265. The views and conclusions contained in this document are those of the authors and should not be interpreted as representing the official policies, either expressed or implied, of the Army Research Laboratory or DARPA or the U.S. Government. The U.S. Government is authorized to reproduce and distribute reprints for Government purposes notwithstanding any copyright notation herein.

\bibliographystyle{unsrt}

\bibliography{references,FL}
\newpage
\appendix
\begin{center}
\textbf{\large Supplemental Material}
\end{center}

We give a table to summarize the content of the supplemental material.
\begin{table}[h!]
\centering
\begin{tabular}{||c | c | c||} 
 \hline
 {\bf Section} & {\bf Content}\\ 
 \hline\hline
 \cref{app.useful_lemmas} & some useful lemmas as technical tools\\
 \hline
 \cref{app.proof_K1} & proof of \cref{th.K1} for $K=1$\\
 \hline
 \cref{app.proof_multi_K} & proof of \cref{th.multi_K} for $K<\infty$\\
 \hline
 \cref{app.proof_main} & proof of \cref{th.main} for $K=\infty$\\
 \hline
 \cref{app.notation_table} & a table for some important notations\\
 \hline
\end{tabular}
\caption{Outline of the supplemental material.}
\label{table.outline}
\end{table}

\section{Useful Lemmas}\label{app.useful_lemmas}

In this section, we provide some useful lemmas. Specifically, \cref{le.limit_t} is used to support the claim of the convergence speed in Insight 4. \cref{le.IW,le.bias,le.XXXX} are some results about the Gaussian random matrices that can be found in the literature. We want to highlight \cref{le.cross_term_exact_value} as part of our technical novelty, which gives the exact values of terms related to the projection formed by each agent's training inputs. \cref{le.model_error} is used to justify the definition of model error.

\begin{lemma}\label[lemma]{le.limit_t}
Recalling the definition of $C$ in \cref{eq.temp_092704}, we have
\begin{align*}
    \lim_{t=p\ln p,\ p\to \infty} C^t = 0.
\end{align*}
\end{lemma}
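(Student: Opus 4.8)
The plan is to upgrade the trivial observation $C \to 1^{-}$ as $p\to\infty$ into a quantitative rate, and then feed that rate into the elementary inequality $1+u\le e^{u}$. Concretely, I would first substitute $x \defeq n/p$ into the definition of $C$ in \cref{eq.temp_092704} and expand:
\[
C = \frac{1}{m}(1-x) + \frac{m-1}{m}(1-x)^2 = 1 - \frac{(2m-1)n}{mp} + \frac{(m-1)n^2}{mp^2}.
\]
Since $n\ge 1$ and $m\ge 1$ are fixed, the linear coefficient $(2m-1)n/m$ is a fixed positive constant; write it as $2c$ with $c \defeq (2m-1)n/(2m) > 0$. The remaining term is $O(1/p^2)$, so there is a threshold $p_0$ such that for all $p\ge p_0$ we simultaneously have $p > n$ (hence $0 < 1 - n/p < 1$, so $0 < C < 1$) and $C \le 1 - c/p$.

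Next, for $p\ge p_0$ I would use $1+u\le e^{u}$ with $u = -c/p$ together with $0<C\le 1-c/p<1$ to get
\[
C^{t} \le \left(1-\frac{c}{p}\right)^{t} \le e^{-ct/p}.
\]
Finally, substituting $t = p\ln p$ yields $C^{t} \le e^{-c\ln p} = p^{-c}$, and since $c>0$ this tends to $0$ as $p\to\infty$, which is exactly the claim.

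The argument is entirely routine; there is no genuine obstacle. The only point that warrants a line of justification is confirming that $C$ stays strictly inside $(0,1)$ for all large $p$ — needed both so that $C^{t}$ is meaningful and so that the monotonicity step $C^{t}\le(1-c/p)^{t}$ is valid — but this is immediate from $0 < 1 - n/p < 1$ in the over-parameterized regime $p>n$, which holds once $p$ is large enough. I would also remark explicitly that $n$ and $m$ are held fixed here (as in the simple case of \cref{eq.simple_case1,eq.simple_case4,eq.simple_case5}), since that is what makes $c$ a genuine constant independent of $p$.
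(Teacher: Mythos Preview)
Your proposal is correct and follows essentially the same approach as the paper: bound $C$ above by $1-\text{const}/p$ and then show that this quantity raised to the power $t=p\ln p$ tends to zero. The only cosmetic differences are that the paper obtains the cleaner bound $C\le 1-n/p$ in one line from $(1-n/p)^2\le 1-n/p$ rather than expanding $C$ as you do, and it finishes via the limit $(1+1/x)^x\to e$ instead of your inequality $1+u\le e^u$; neither difference is substantive.
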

\begin{proof}
We have $C^t\geq 0$ and
\begin{align*}
    C^t \leq& \left(1-\frac{n}{p}\right)^t\ \text{ (since $C\leq \left(1-\frac{n}{p}\right)$ because $\left(1-\frac{n}{p}\right)^2 \leq \left(1-\frac{n}{p}\right)$)}\\
    =&\left(1+\frac{1}{\frac{p}{n}-1}\right)^{-t}\ \text{ (since $1-\frac{n}{p}=\frac{1}{1+\frac{1}{\frac{p}{n}-1}})$}\\
    =& \left(1+\frac{1}{\frac{p}{n}-1}\right)^{-p\ln p}\ \text{ (since $t = p\ln p$)}\\
    =& \left(1+\frac{1}{\frac{p}{n}-1}\right)^{-\frac{p}{n}\cdot n \cdot \ln p}\\
    \leq & \left(1+\frac{1}{\frac{p}{n}-1}\right)^{-\left(\frac{p}{n}-1\right)\cdot n \cdot \ln p}.
\end{align*}
Notice that
\begin{align*}
    \lim_{p\to \infty} \left(1+\frac{1}{\frac{p}{n}-1}\right)^{-\left(\frac{p}{n}-1\right)\cdot n \cdot \ln p} = \lim_{p\to \infty} e^{-n\ln p} = 0,
\end{align*}
where we use the fact that $\lim_{x\to \infty}(1 + x^{-1})^x=e$. The result of this lemma thus follows by the squeeze theorem.
\end{proof}

The result of the following lemma can be found in the literature (e.g., \cite{belkin2020two,ju2022theoretical}).
\begin{lemma}\label[lemma]{le.bias}
Consider a random matrix $\mathbf{K}\in \mathds{R}^{p\times n}$ where $p$ and $n$ are two positive integers and $p>n+1$. Each element of $\mathbf{K}$ is \emph{i.i.d.} according to standard Gaussian distribution. For any fixed vector $\bm{a}\in \mathds{R}^p$, we must have
\begin{align*}
    &\E\norm{\left(\iMatrix{p}-\mathbf{K}\left(\mathbf{K}\T\mathbf{K}\right)^{-1}\mathbf{K}\T\right)\bm{a}}^2 = \left(1-\frac{n}{p}\right)\norm{\bm{a}}^2,\\
    &\E\norm{\mathbf{K}\left(\mathbf{K}\T\mathbf{K}\right)^{-1}\mathbf{K}\T\bm{a}}^2 = \frac{n}{p}\norm{\bm{a}}^2.
\end{align*}
\end{lemma}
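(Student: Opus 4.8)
The plan is to exploit the fact that $\mathbf{P}\defeq\mathbf{K}(\mathbf{K}\T\mathbf{K})^{-1}\mathbf{K}\T$ is the orthogonal projection onto the column space of $\mathbf{K}$, together with the rotational invariance of the standard Gaussian. First I would observe that, since each column of $\mathbf{K}$ is i.i.d.\ standard Gaussian in $\mathds{R}^p$ and $p>n$, the $n$ columns are linearly independent with probability $1$; hence $\mathbf{K}\T\mathbf{K}$ is invertible almost surely and $\mathbf{P}$ is well defined. A direct check gives $\mathbf{P}\T=\mathbf{P}$ and $\mathbf{P}^2=\mathbf{P}$, so for any fixed $\bm{a}$ we have $\norm{\mathbf{P}\bm{a}}^2=\bm{a}\T\mathbf{P}\T\mathbf{P}\bm{a}=\bm{a}\T\mathbf{P}\bm{a}$; it therefore suffices to compute the matrix $\E[\mathbf{P}]$.

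The key step is to show $\E[\mathbf{P}]=\frac{n}{p}\iMatrix{p}$. For any orthogonal $\mathbf{U}\in\mathds{R}^{p\times p}$, the matrix $\mathbf{U}\mathbf{K}$ again has i.i.d.\ standard Gaussian columns, so $\mathbf{U}\mathbf{K}\overset{d}{=}\mathbf{K}$; moreover the projection onto the column space of $\mathbf{U}\mathbf{K}$ equals $\mathbf{U}\mathbf{P}\mathbf{U}\T$. Taking expectations yields $\mathbf{U}\,\E[\mathbf{P}]\,\mathbf{U}\T=\E[\mathbf{P}]$ for every orthogonal $\mathbf{U}$, which forces $\E[\mathbf{P}]=c\,\iMatrix{p}$ for some scalar $c$. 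Taking the trace and using that $\mathrm{tr}(\mathbf{P})=\mathrm{rank}(\mathbf{P})=n$ almost surely (a symmetric idempotent matrix has trace equal to its rank, and $\mathbf{K}$ has full column rank a.s.), we get $pc=n$, i.e.\ $c=n/p$. Hence $\E\norm{\mathbf{P}\bm{a}}^2=\bm{a}\T\E[\mathbf{P}]\bm{a}=\frac{n}{p}\norm{\bm{a}}^2$, which is the second identity.

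The first identity then follows immediately: $\iMatrix{p}-\mathbf{P}$ is the orthogonal projection onto the complement $\mathrm{col}(\mathbf{K})^{\perp}$, and the Pythagorean identity gives $\norm{\bm{a}}^2=\norm{\mathbf{P}\bm{a}}^2+\norm{(\iMatrix{p}-\mathbf{P})\bm{a}}^2$ pointwise, so taking expectations yields $\E\norm{(\iMatrix{p}-\mathbf{P})\bm{a}}^2=(1-\frac{n}{p})\norm{\bm{a}}^2$ (alternatively, one repeats the trace argument with $\mathrm{tr}(\iMatrix{p}-\mathbf{P})=p-n$). The only delicate point is the almost-sure invertibility and rank statement, which is precisely where the Gaussian assumption and $p>n$ are used; once that is settled, everything is exact, requiring no concentration or asymptotic estimates. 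Accordingly I do not expect a real obstacle here, consistent with the lemma being a standard fact quoted from the literature.
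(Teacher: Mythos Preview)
Your proposal is correct: the rotational-invariance argument showing $\E[\mathbf{P}]=\tfrac{n}{p}\iMatrix{p}$, combined with the idempotence of $\mathbf{P}$ so that $\norm{\mathbf{P}\bm{a}}^2=\bm{a}\T\mathbf{P}\bm{a}$, yields both identities cleanly. The paper itself does not give a proof of this lemma but simply cites it from the literature (e.g., Belkin et al.\ 2020, Ju et al.\ 2022), so your argument in fact supplies more detail than the paper does.
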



The following lemma can be found in Lemma~8 of \cite{ju2023generalization}.
\begin{lemma}\label[lemma]{le.IW}
Consider a random matrix $\mathbf{K}\in \mathds{R}^{a\times b}$ where $a>b+1$. Each element of $\mathbf{K}$ is \emph{i.i.d.} following standard Gaussian distribution $\mathcal{N}(0,1)$. Consider three Gaussian random vectors $\bm{\alpha},\bm{\gamma}\in\mathbf{R}^a$ and $\bm{\beta}\in\mathbf{R}^b$ such that $\bm{\alpha}\sim \mathcal{N}(\bm{0}, \sigma_{\alpha}^2\iMatrix{a})$, $\bm{\gamma}\sim \mathcal{N}(\bm{0}, \diag(d_1^2,d_2^2,\cdots,d_a^2))$, and $\bm{\beta}\sim \mathcal{N}(\bm{0}, \sigma_{\beta}^2\iMatrix{b})$. Here $\mathbf{K}$, $\bm{\alpha}$, $\bm{\gamma}$, and $\bm{\beta}$ are independent of each other. We then must have
\begin{align}
    &\E \left[(\mathbf{K}\T\mathbf{K})^{-1}\right]=\frac{\iMatrix{b}}{a-b-1},\label{eq.temp_010301}\\
    &\E \norm{\mathbf{K}(\mathbf{K}\T\mathbf{K})^{-1}\bm{\beta}}^2=\frac{b\sigma_{\beta}^2}{a-b-1},\label{eq.temp_010302}\\
    &\E \norm{(\mathbf{K}\T\mathbf{K})^{-1}\mathbf{K}\T\bm{\alpha}}^2=\frac{b\sigma_{\alpha}^2}{a-b-1},\label{eq.temp_010303}\\
    & \E \norm{(\mathbf{K}\T\mathbf{K})^{-1}\mathbf{K}\T\bm{\gamma}}^2=\frac{b\sum_{i=1}^a d_i^2}{a(a-b-1)}.\label{eq.temp_010320}
\end{align}
\end{lemma}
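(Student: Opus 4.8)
The plan is to reduce all four identities to the single inverse-Wishart first moment in \cref{eq.temp_010301} and then exploit independence and rotational symmetry. For \cref{eq.temp_010301} itself, I would note that the rows of $\mathbf{K}$ are $a$ i.i.d. draws from $\mathcal{N}(\bm{0},\iMatrix{b})$, so $\mathbf{K}\T\mathbf{K}$ is a Wishart matrix with $a$ degrees of freedom and identity scale; the claimed value $\iMatrix{b}/(a-b-1)$ is then the standard first moment of the inverse Wishart, valid exactly when $a>b+1$. If a self-contained derivation is preferred, I would observe that the law of $\mathbf{K}$ is invariant under right-multiplication by any orthogonal $\mathbf{V}\in\mathds{R}^{b\times b}$, whence $(\mathbf{K}\T\mathbf{K})^{-1}$ transforms to $\mathbf{V}\T(\mathbf{K}\T\mathbf{K})^{-1}\mathbf{V}$ in distribution; thus $\E[(\mathbf{K}\T\mathbf{K})^{-1}]$ commutes with every orthogonal matrix and must be a scalar multiple of $\iMatrix{b}$, with the scalar pinned down by the value $\E[\operatorname{tr}((\mathbf{K}\T\mathbf{K})^{-1})]=b/(a-b-1)$.

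For \cref{eq.temp_010302,eq.temp_010303}, the key step is to turn each squared norm into a quadratic form and integrate out the independent Gaussian vector first. Writing $\mathbf{M}\defeq \mathbf{K}\T\mathbf{K}$, which is symmetric so that $\mathbf{M}^{-1}$ is symmetric, I would compute $\norm{\mathbf{K}\mathbf{M}^{-1}\bm{\beta}}^2 = \bm{\beta}\T\mathbf{M}^{-1}\mathbf{M}\mathbf{M}^{-1}\bm{\beta} = \bm{\beta}\T\mathbf{M}^{-1}\bm{\beta}$, and $\norm{\mathbf{M}^{-1}\mathbf{K}\T\bm{\alpha}}^2 = \bm{\alpha}\T\mathbf{K}\mathbf{M}^{-2}\mathbf{K}\T\bm{\alpha}$. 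Since $\bm{\beta}$ and $\bm{\alpha}$ are independent of $\mathbf{K}$ with isotropic covariances, taking their expectations first gives $\sigma_\beta^2\,\E[\operatorname{tr}(\mathbf{M}^{-1})]$ and $\sigma_\alpha^2\,\E[\operatorname{tr}(\mathbf{K}\mathbf{M}^{-2}\mathbf{K}\T)]$, respectively. The cyclic property of the trace collapses $\operatorname{tr}(\mathbf{K}\mathbf{M}^{-2}\mathbf{K}\T)=\operatorname{tr}(\mathbf{M}^{-2}\mathbf{M})=\operatorname{tr}(\mathbf{M}^{-1})$, so both reduce to $\E[\operatorname{tr}(\mathbf{M}^{-1})]=\operatorname{tr}(\E[\mathbf{M}^{-1}])=b/(a-b-1)$ by \cref{eq.temp_010301}, yielding the two stated values.

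The only genuinely new ingredient is \cref{eq.temp_010320}, where $\bm{\gamma}$ has the anisotropic covariance $\mathbf{D}\defeq\diag(d_1^2,\dots,d_a^2)$. The same reduction gives $\E\norm{\mathbf{M}^{-1}\mathbf{K}\T\bm{\gamma}}^2 = \E[\operatorname{tr}(\mathbf{N}\mathbf{D})]=\operatorname{tr}(\E[\mathbf{N}]\,\mathbf{D})$ with $\mathbf{N}\defeq\mathbf{K}\mathbf{M}^{-2}\mathbf{K}\T\in\mathds{R}^{a\times a}$, so the computation of the full $a\times a$ matrix expectation $\E[\mathbf{N}]$ --- not merely its trace --- is required. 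Here I would invoke that the law of $\mathbf{K}$ is invariant under left-multiplication by any orthogonal $\mathbf{U}\in\mathds{R}^{a\times a}$ (note $\mathbf{M}$ is itself unchanged since $\mathbf{K}\T\mathbf{U}\T\mathbf{U}\mathbf{K}=\mathbf{M}$), under which $\mathbf{N}$ transforms to $\mathbf{U}\mathbf{N}\mathbf{U}\T$; hence $\E[\mathbf{N}]$ commutes with every orthogonal matrix and equals $c\,\iMatrix{a}$ for a scalar $c$. Taking traces fixes $c\,a=\E[\operatorname{tr}(\mathbf{N})]=\E[\operatorname{tr}(\mathbf{M}^{-1})]=b/(a-b-1)$, so $\E[\mathbf{N}]=\frac{b}{a(a-b-1)}\iMatrix{a}$ and $\operatorname{tr}(\E[\mathbf{N}]\,\mathbf{D})=\frac{b}{a(a-b-1)}\sum_{i=1}^a d_i^2$, which is the claim. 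I expect the main subtlety to lie exactly in this last step: recognizing that the anisotropy of $\bm{\gamma}$ forces one to identify the entire matrix $\E[\mathbf{N}]$, and that its being a scalar multiple of the identity follows cleanly from the $O(a)$-invariance of $\mathbf{K}$'s distribution rather than from any direct moment computation.
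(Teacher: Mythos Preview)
Your argument is correct in every step. One remark on the comparison: the paper does not actually prove this lemma at all --- it is stated as a known result and attributed to Lemma~8 of \cite{ju2023generalization}, with no derivation given. So there is no ``paper's own proof'' to compare against; your proposal supplies a clean, self-contained proof where the paper simply cites the literature. Your reduction of all four identities to the single inverse-Wishart first moment via independence, the trace trick, and $O(a)$/$O(b)$-invariance is the natural route and is more informative than a bare citation.
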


The following lemma can be found in \cite{bernacchia2021meta} and  Lemma~13 of \cite{ju2022theoretical}.
\begin{lemma}\label[lemma]{le.XXXX}
    Consider a random matrix $\mathbf{K}\in \mathds{R}^{a\times b}$ whose each element follows \emph{i.i.d.} standard Gaussian distribution (i.e., \emph{i.i.d.} $\mathcal{N}(0, 1)$). We mush have
    \begin{align*}
        &\E[\mathbf{K}\T \mathbf{K}]=a\iMatrix{b},\\
        &\E[\mathbf{K} \mathbf{K}\T]=b\iMatrix{a},\\
        &\E[\mathbf{K} \mathbf{K}\T \mathbf{K}\mathbf{K}\T ]=b(b+a+1)\iMatrix{a}.
    \end{align*}
\end{lemma}

\begin{lemma}\label[lemma]{le.cross_term_exact_value}
For any $i\in [m]$ and $t$, we must have
\begin{align}
    \E_{\Pij{i}{t}} \left[\Pij{i}{t}\deltawt{t-1}\right] = \frac{\nij{i}{t}}{p} \deltawt{t-1}.\label{eq.temp_072101}
\end{align}
Consequently, when $i\neq j$, we have
\begin{align*}
    \E_{\Pij{i}{t},\Pij{j}{t}}\left[\deltawt{t-1}\T \Pij{i}{t}\Pij{j}{t}\deltawt{t-1}\right] = \frac{\nij{j}{t}\nij{i}{t}}{p^2}\norm{\deltawt{t-1}}^2.
\end{align*}
\end{lemma}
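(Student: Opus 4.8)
The plan is to prove the first identity \cref{eq.temp_072101} by a rotational-symmetry argument showing $\E[\Pij{i}{t}] = \tfrac{\nij{i}{t}}{p}\iMatrix{p}$, and then to obtain the second identity by exploiting independence across agents. First I would record the measure-theoretic setup: $\deltawt{t-1}$ is a deterministic function of the training data $\{\XX{i}{s},\ee{i}{s}\}_{i\in[m],\,s\le t-1}$ from the first $t-1$ rounds, while $\Pij{i}{t}$ depends only on $\XX{i}{t}$; by \cref{as.Gaussian} the round-$t$ inputs are independent of everything from earlier rounds, so inside the conditional expectation $\E_{\Pij{i}{t}}[\cdot]$ we may treat $\deltawt{t-1}$ as a fixed vector $\bm{a}\in\mathds{R}^p$. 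Hence it suffices to show $\E[\Pij{i}{t}\bm{a}] = \E[\Pij{i}{t}]\,\bm{a} = \tfrac{\nij{i}{t}}{p}\bm{a}$.

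For the operator identity, note that under the over-parameterized assumption of \cref{th.main} we have $p > \nij{i}{t}$, so the columns of $\XX{i}{t}$ are almost surely linearly independent; thus $\XX{i}{t}\T\XX{i}{t}$ is a.s. invertible, $\Pij{i}{t}$ is a.s. the orthogonal projection onto an $\nij{i}{t}$-dimensional subspace, and $\mathrm{tr}\,\Pij{i}{t} = \nij{i}{t}$ a.s. Since each column of $\XX{i}{t}$ is standard Gaussian, its law is rotationally invariant: for any orthogonal $Q\in\mathds{R}^{p\times p}$ we have $Q\XX{i}{t}\overset{d}{=}\XX{i}{t}$, and therefore $Q\Pij{i}{t}Q\T$ (which equals the projection built from $Q\XX{i}{t}$) satisfies $Q\Pij{i}{t}Q\T\overset{d}{=}\Pij{i}{t}$. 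Taking expectations gives $Q\,\E[\Pij{i}{t}]\,Q\T = \E[\Pij{i}{t}]$ for every orthogonal $Q$, which forces $\E[\Pij{i}{t}] = c\iMatrix{p}$ for a scalar $c$; comparing traces and using $\mathrm{tr}\,\Pij{i}{t} = \nij{i}{t}$ a.s. yields $cp = \nij{i}{t}$, so $c = \nij{i}{t}/p$. Applying this to the fixed vector $\deltawt{t-1}$ establishes \cref{eq.temp_072101}. (This is the rigorous version of the $\cos^2\theta$ geometric picture: the perpendicular components of the projection average out by symmetry, leaving a multiple of the identity.)

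For the consequence, I would again use that $\deltawt{t-1}$ is independent of both $\Pij{i}{t}$ and $\Pij{j}{t}$, and additionally that for $i\ne j$ the matrices $\XX{i}{t}$ and $\XX{j}{t}$ are independent under \cref{as.Gaussian}, so $\Pij{i}{t}$ and $\Pij{j}{t}$ are independent. Conditioning on $\deltawt{t-1}=\bm{a}$ and using that $\Pij{i}{t}$ is symmetric, write $\deltawt{t-1}\T\Pij{i}{t}\Pij{j}{t}\deltawt{t-1} = (\Pij{i}{t}\bm{a})\T(\Pij{j}{t}\bm{a})$; independence then gives $\E[(\Pij{i}{t}\bm{a})\T(\Pij{j}{t}\bm{a})] = (\E[\Pij{i}{t}]\bm{a})\T(\E[\Pij{j}{t}]\bm{a}) = \tfrac{\nij{i}{t}\nij{j}{t}}{p^2}\norm{\bm{a}}^2$, and taking the outer expectation over $\deltawt{t-1}$ finishes the proof.

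The calculations are short; the only points that require care are the probabilistic bookkeeping — verifying the independence relations among $\deltawt{t-1}$, $\Pij{i}{t}$, and $\Pij{j}{t}$ so that the symmetry and factorization steps are legitimate — and the observation that the trace/rank statement relies on the over-parameterized assumption $p > \max_i \nij{i}{t} + 1$. An alternative route would evaluate $\E[\XX{i}{t}(\XX{i}{t}\T\XX{i}{t})^{-1}\XX{i}{t}\T]$ directly via the inverse-Wishart identities of \cref{le.IW,le.XXXX}, but the symmetry argument is cleaner and avoids these computations.
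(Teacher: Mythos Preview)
Your proposal is correct and, in fact, cleaner than the paper's argument, though both rest on the same underlying rotational symmetry of the Gaussian design. The paper proceeds more concretely: it fixes $\deltawt{t-1}=C\,\bm{e}_1$ (using rotational symmetry implicitly), then pairs each sample matrix $\XX{i}{t}$ with its reflection $\TildeX$ obtained by negating the first row. This reflection leaves $\XX{i}{t}\T\XX{i}{t}$ unchanged and has the same law, so averaging the two projections kills every coordinate except the first; what remains is shown to equal $\tfrac{\deltawt{t-1}}{C^2}\,\|\Pij{i}{t}\deltawt{t-1}\|^2$, and the coefficient $\nij{i}{t}/p$ is then read off from \cref{le.bias}. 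Your route instead exploits invariance under the full orthogonal group to conclude $\E[\Pij{i}{t}]=c\,\iMatrix{p}$ directly and fixes $c$ by the trace identity $\mathrm{tr}\,\Pij{i}{t}=\nij{i}{t}$, bypassing both the explicit reflection pairing and the appeal to \cref{le.bias}. The second identity is handled the same way in both proofs, via independence of $\Pij{i}{t}$ and $\Pij{j}{t}$ for $i\neq j$. Your version is shorter and more transparent; the paper's version has the mild advantage of making the geometric $\cos^2\theta$ picture explicit, which it uses elsewhere for intuition.
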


\begin{proof}
Let $C\defeq \norm{\deltawt{t-1}}$. Since we are calculating expected projection of $\deltawt{t-1}$ onto the column space of $\XX{i}{t}$, by the symmetry of $\XX{i}{t}$, without loss of generality we let
\begin{align}
    \deltawt{t-1} = C \cdot \myMa{1\\0\\0\\\vdots\\0}.\label{eq.temp_081502}
\end{align}
\newcommand{\TildeX}{\tilde{\mathbf{X}}_{(i), t}}
Define
\begin{align}\label{eq.temp_081601}
    \TildeX \defeq \myMa{-1 & & &\\& 1 & &\\& & \ddots &\\& & & 1}\XX{i}{t}.
\end{align}
Since each element of $\XX{i}{t}$ follows \emph{i.i.d.} standard Gaussian distribution, we know that $\TildeX$ and $\XX{i}{t}$ has identical distributioin. Thus, we have
\begin{align}
    \int \XX{i}{t}(\XX{i}{t}\T\XX{i}{t})\XX{i}{t}\T \deltawt{t-1} d \mu(\XX{i}{t}) = \int \TildeX (\TildeX\T \TildeX)\TildeX \deltawt{t-1} d \mu(\XX{i}{t}),\label{eq.temp_081501}
\end{align}
where $\mu(\XX{i}{t})$ denotes the joint probability distribution of $\XX{i}{t}$.

By \cref{eq.temp_081601}, we have
\begin{align*}
    &\TildeX\T \TildeX = \XX{i}{t}\T\myMa{-1 & & &\\& 1 & &\\& & \ddots &\\& & & 1}\myMa{-1 & & &\\& 1 & &\\& & \ddots &\\& & & 1}\XX{i}{t}=\XX{i}{t}\T \XX{i}{t},\\
    & \XX{i}{t}\T \deltawt{t-1} = [\XX{i}{t}]_{1,:},\  \TildeX\T \deltawt{t-1} = -[\XX{i}{t}]_{1,:}\text{ (here $[\cdot]_{1,:}$ denotes the first row of a matrix)}.
\end{align*}
Thus, we have
\begin{align}\label{eq.temp_081602}
    \TildeX(\TildeX\T\TildeX)^{-1}\TildeX\T \deltawt{t-1} = - \TildeX(\TildeX\T\TildeX)^{-1}\XX{i}{t}\T \deltawt{t-1}.
\end{align}
Therefore, we have
\begin{align*}
    &\XX{i}{t}(\XX{i}{t}\T\XX{i}{t})^{-1}\XX{i}{t}\T\deltawt{t-1} + \TildeX(\TildeX\T\TildeX)^{-1}\TildeX\T \deltawt{t-1}\\
    =& (\XX{i}{t}-\TildeX)(\XX{i}{t}\T \XX{i}{t})^{-1}\XX{i}{t}\T \deltawt{t-1}\ \text{ (by \cref{eq.temp_081602})}\\
    =& \myMa{2 & & &\\& 0 & &\\& & \ddots &\\& & & 0}\XX{i}{t}(\XX{i}{t}\T\XX{i}{t})^{-1} \XX{i}{t}\T \deltawt{t-1}\ \text{ (by \cref{eq.temp_081601})}\\
    =& \myMa{1 \\ 0 \\ \vdots \\ 0}\myMa{2 & 0 & \cdots & 0}\XX{i}{t}(\XX{i}{t}\T\XX{i}{t})^{-1} \XX{i}{t}\T \deltawt{t-1} \\
    =& 2\frac{\deltawt{t-1}}{C^2} \deltawt{t-1}\T \XX{i}{t}(\XX{i}{t}\T\XX{i}{t})^{-1} \XX{i}{t}\T \deltawt{t-1} \text{ (by \cref{eq.temp_081502})}\\
    =& 2\frac{\deltawt{t-1}}{C^2} \deltawt{t-1}\T\Pij{i}{t}\deltawt{t-1}\ \text{ (by \cref{eq.def_Pij})}\\
    =& 2\frac{\deltawt{t-1}}{C^2}\norm{\Pij{i}{t}\deltawt{t-1}}^2\ \text{ (since $\Pij{i}{t}\T\Pij{i}{t}=\Pij{i}{t}$ as $\Pij{i}{t}$ is an orthogonal projection)}. \numberthis \label{eq.temp_081503}
\end{align*}
Thus, we have
\begin{align*}
    &\E_{\XX{i}{t}} [\Pij{i}{t}\deltawt{t-1}]\\
    = &\int \XX{i}{t}(\XX{i}{t}\T\XX{i}{t})^{-1}\XX{i}{t}\T \deltawt{t-1} d\mu(\XX{i}{t})\\
    =& \frac{1}{2}\int \left(\XX{i}{t}(\XX{i}{t}\T\XX{i}{t})^{-1}\XX{i}{t}\T \deltawt{t-1} + \TildeX (\TildeX\T \TildeX)\TildeX\T \deltawt{t-1}\right)d\mu(\XX{i}{t})\ \text{ (by \cref{eq.temp_081501})}\\
    =& \int \frac{\deltawt{t-1}}{C^2} \norm{\Pij{i}{t}\deltawt{t-1}}^2 d \mu(\XX{i}{t})\\
    =&\frac{\deltawt{t-1}}{C^2}\E_{\XX{i}{t}} \norm{\Pij{i}{t}\deltawt{t-1}}^2\\
    =&\frac{\nij{i}{t}}{p} \deltawt{t-1}\ \text{ (by \cref{le.bias})}.
\end{align*}
The result of this lemma thus follows.
\end{proof}

\begin{lemma}\label[lemma]{le.model_error}
Let the noise in every test sample have zero mean and variance $\sigma^2$. For any learning result $\what$, the mean square test error must equal to $\norm{\what - \vwTrue}^2 + \sigma^2$. Therefore, the mean squared test error for noise-free test samples equals to the model error $L^{\text{model}}(\hat{\vw})=\norm{\hat{\vw}-\vwTrue}^2$.
\end{lemma}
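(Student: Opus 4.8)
The plan is to unwind the definition of the mean squared test error directly. Take a fresh test sample $(\bm{x}, y)$ drawn from the ground-truth model $y = \bm{x}\T \vwTrue + \epsilon$, where $\epsilon$ has zero mean, variance $\sigma^2$, and is independent of $\bm{x}$, and where (by \cref{as.Gaussian}) the feature vector $\bm{x}$ has i.i.d.\ standard Gaussian entries, hence $\E[\bm{x}\bm{x}\T] = \iMatrix{p}$. The prediction of the learned model $\what$ on this sample is $\bm{x}\T \what$, so the test error on this sample is $(y - \bm{x}\T\what)^2$, and the mean squared test error is $\E_{\bm{x},\epsilon}\big[(y - \bm{x}\T\what)^2\big]$ with $\what$ held fixed.

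First I would substitute $y = \bm{x}\T\vwTrue + \epsilon$ to get $y - \bm{x}\T\what = \bm{x}\T(\vwTrue - \what) + \epsilon$. Squaring and taking expectation, the cross term $2\,\E[\epsilon\,\bm{x}\T(\vwTrue - \what)]$ vanishes because $\epsilon$ is independent of $\bm{x}$ and has zero mean, and $\E[\epsilon^2] = \sigma^2$. This leaves $\E\big[(\bm{x}\T(\vwTrue - \what))^2\big] + \sigma^2$. For the remaining term, write $(\bm{x}\T\bm{v})^2 = \bm{v}\T \bm{x}\bm{x}\T \bm{v}$ with $\bm{v} = \vwTrue - \what$ fixed, so that $\E\big[(\bm{x}\T\bm{v})^2\big] = \bm{v}\T \E[\bm{x}\bm{x}\T]\bm{v} = \bm{v}\T \iMatrix{p}\bm{v} = \norm{\bm{v}}^2 = \norm{\what - \vwTrue}^2$. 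Combining, the mean squared test error equals $\norm{\what - \vwTrue}^2 + \sigma^2$, which is the first claim. Setting $\sigma^2 = 0$ for noise-free test samples immediately yields that the mean squared test error reduces to $L^{\text{model}}(\what) = \norm{\what - \vwTrue}^2$, the second claim.

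This argument is essentially routine; there is no real obstacle. The only point requiring a little care is being explicit that the expectation defining the test error is over the randomness of the test sample $(\bm{x},\epsilon)$ with the trained $\what$ treated as a fixed (already-learned) vector, so that $\bm{v} = \vwTrue - \what$ can be pulled outside the expectation; and that the isotropy $\E[\bm{x}\bm{x}\T] = \iMatrix{p}$ is exactly what makes the quadratic form collapse to the squared Euclidean norm (this is where \cref{as.Gaussian} enters). If one instead wanted an averaged-over-training statement, one could additionally take expectation over the training data, but the lemma as stated is a per-$\what$ identity, so no such step is needed.
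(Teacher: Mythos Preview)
Your proposal is correct and follows essentially the same approach as the paper's proof: both substitute $y=\bm{x}\T\vwTrue+\epsilon$, use the zero-mean independence of $\epsilon$ to drop the cross term, and then invoke the isotropy $\E[\bm{x}\bm{x}\T]=\iMatrix{p}$ from \cref{as.Gaussian} to reduce $\E[(\bm{x}\T(\vwTrue-\what))^2]$ to $\norm{\what-\vwTrue}^2$. Your write-up is, if anything, a bit more explicit about the cross term and the role of isotropy than the paper's version.
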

\begin{proof}
Considering $(\bm{x}, y)$ as a randomly generated test sample by the ground truth $y=\bm{x}\T \vwTrue + \epsilon$, the mean squared error is equal to
\begin{align*}
    \E_{\bm{x},y} \norm{\bm{x}\T \what - y}
    =&\E_{\bm{x},\epsilon} \norm{\bm{x}\T \what - (\bm{x}\T \vwTrue + \epsilon)}^2 \\
    =& \E_{\bm{x},\epsilon} \norm{\bm{x}\T (\what - \vwTrue) + \epsilon}^2\\
    =& \E_{\bm{x}} \norm{\bm{x}\T (\what - \vwTrue)}^2 + \E_{\epsilon} \norm{\epsilon}^2\\
    & \text{ (since the noise $\epsilon$ has zero mean and is independent of other random variables)}\\
    =& \norm{\what - \vwTrue}^2 + \sigma^2\\
    & \text{ (notice that $\bm{x}$ follows standard Gaussian distribution and is independent of $\what$)}.
\end{align*}
\end{proof}





\section{Proof of Theorem~\ref{th.K1}}\label{app.proof_K1}

Calculating the gradient of the training loss defined at \cref{eq.train_loss}, we have
\begin{align*}
    \frac{\partial L(\what)}{\partial \what} =& \frac{\partial (\vy - \mX\T \what)}{\partial \what}\cdot \frac{\partial \frac{1}{2n}\norm{\vy - \mX\T \what}^2}{\partial (\vy - \mX\T\what)}\ \text{ (by the chain rule)}\\
    =& -\mX\cdot \frac{1}{n}(\vy - \mX\T \what)\\
    =&\frac{1}{n}(\mX\mX\T \what-\mX\vy).
\end{align*}

When $K=1$, with step size $\step>0$, we thus have
\begin{align*}
    \wwij{i}{t}=& \left(\iMatrix{p} - \frac{\step}{\nij{i}{t}}\XX{i}{t}\XX{i}{t}\T\right)\wwavg{t-1}+\frac{\step}{\nij{i}{t}}\XX{i}{t}\yy{i}{t}\\
    =& \left(\iMatrix{p} - \frac{\step}{\nij{i}{t}}\XX{i}{t}\XX{i}{t}\T\right)\wwavg{t-1}+\frac{\step}{\nij{i}{t}}\XX{i}{t}\left(\XX{i}{t}\T \wijTrue{i}{t} + \ee{i}{t}\right) \ \text{ (by \cref{eq.linear_model})}.
\end{align*}
Thus, we have
\begin{align*}
    \wwavg{t} =& \frac{1}{\sum_{i\in [m]}\nij{i}{t}} \sum_{i\in [m]}\nij{i}{t}\wwij{i}{t}\\
    =& \wwavg{t-1} + \mySum\step\left(- \XX{i}{t}\XX{i}{t}\T\wwavg{t-1} +  \XX{i}{t}\XX{i}{t}\T\wijTrue{i}{t} + \XX{i}{t}\ee{i}{t}\right).\numberthis \label{eq.temp_090101}
\end{align*}
By \cref{eq.def_delta,eq.define_gammait}, we have
\begin{align*}
    &\deltawwt{t} \\
    =& \deltawwt{t-1} + \mySum \step \left(\XX{i}{t}\XX{i}{t}\T(\gammait{i}{t}-\deltawwt{t-1})-\XX{i}{t}\ee{i}{t}\right)\\
    =&\mySum \Big(\underbrace{\left(\nij{i}{t}\iMatrix{p}-\step\XX{i}{t}\XX{i}{t}\T\right)\deltawwt{t-1}}_{\bm{q}_{1i}} + \underbrace{\step\XX{i}{t}\XX{i}{t}\T\gammait{i}{t}}_{\bm{q}_{2i}} - \underbrace{\step\XX{i}{t}\ee{i}{t}}_{\bm{q}_{3i}}\Big)\numberthis \label{eq.temp_090201}\\
    &\text{ (since $\deltawwt{t-1}=\mySum \nij{i}{t}\deltawwt{t-1}$)}.
\end{align*}
Considering the three types of terms $\bm{q}_{1i},\bm{q}_{2i},\bm{q}_{3i}$ defined in \cref{eq.temp_090201}, by \cref{as.Gaussian}, we have
\begin{equation}\label{eq.temp_090401}
    \begin{aligned}
        &\E_t \bm{q}_{1i} = \nij{i}{t}\left(1-\step\right)\deltawwt{t-1},\\
    &\E_t \bm{q}_{2i} = \step\nij{i}{t}\gammait{i}{t},\\
    &\E_t \bm{q}_{3i} = \bm{0}.
    \end{aligned}
\end{equation}
Notice that we use $\E$ to denote the expectation on all randomness and use $\E_{t}$ to denote the expectation on the randomness at the $t$-th round, i.e., on the randomness of $\XX{i}{t}$ and $\ee{i}{t}$ for all $i\in [m]$. 
By \cref{eq.temp_090401,eq.temp_090201}, we thus have
\begin{align*}
    \E_t \deltawwt{t} =& \mySum\left(\nij{i}{t}\left(1-\step\right)\deltawwt{t-1}+\step\nij{i}{t}\gammait{i}{t}\right).\numberthis \label{eq.temp_090402}
\end{align*}
Applying \cref{eq.temp_090402} recursively and recalling \cref{eq.def_gt}, we thus have
\begin{align}
    \E [\deltawwt{t}] = \bm{g}_t^{K=1}.\label{eq.temp_090403}
\end{align}

By \cref{as.Gaussian}, we know that $\ee{i}{t}$ is independent of $\XX{j}{t}$ for all $i,j\in [m]$ and $\E\ee{i}{t}=\bm{0}$. Thus, we have
\begin{align*}
    \E _t[\bm{q}_{1i}\T\bm{q}_{3j}]=\E_t[\bm{q}_{2i}\T\bm{q}_{3j}]=0.
\end{align*}
Thus, we have
\begin{align*}
    \E_t \norm{\deltawwt{t}}^2 = &\frac{1}{(\sum_{i\in [m]}\nij{i}{t})^2} \left(\sum_{i\in [m]} \left(\E_t \norm{\bm{q}_{1i}}^2 + \E_t \norm{\bm{q}_{2i}}^2 + \E_t \norm{\bm{q}_{3i}}^2 + 2\E_t [\bm{q}_{1i}\T\bm{q}_{2i}]\right)\right. \\
    & \left.+ \sum_{i\in [m]}\sum_{j\in [m]\setminus\{i\}}\left(\E_t [\bm{q}_{1i}\T\bm{q}_{1j}] + \E_t[\bm{q}_{1i}\T\bm{q}_{2j}]+\E_t[\bm{q}_{1j}\T\bm{q}_{2i}] + \E_t [\bm{q}_{2i}\T\bm{q}_{2j}]\right)\right)\\
    = &\frac{1}{(\sum_{i\in [m]}\nij{i}{t})^2} \left(\sum_{i\in [m]} \left(\E_t \norm{\bm{q}_{1i}}^2 + \E_t \norm{\bm{q}_{2i}}^2 + \E_t \norm{\bm{q}_{3i}}^2 + 2\E_t [\bm{q}_{1i}\T\bm{q}_{2i}]\right)\right. \\
    & \left.+ \sum_{i\in [m]}\sumNeq\left(\E_t [\bm{q}_{1i}\T\bm{q}_{1j}] + 2\E_t[\bm{q}_{1i}\T\bm{q}_{2j}] + \E_t [\bm{q}_{2i}\T\bm{q}_{2j}]\right)\right)\\
    &\text{ (since $\sum_{i\in [m]}\sumNeq \bm{q}_{1i}\T\bm{q}_{2j}+\bm{q}_{1j}\T\bm{q}_{2i}=2\sum_{i\in [m]}\sumNeq \bm{q}_{1i}\T\bm{q}_{2j}$)}.\numberthis \label{eq.temp_090204}
\end{align*}
By \cref{le.XXXX}, for any $i\in [m]$, we have
\begin{equation}\label{eq.temp_090203}
\begin{aligned}
    \E_t\norm{\bm{q}_{1i}}^2 =& \left(\nij{i}{t}^2 - 2\step\nij{i}{t}^2 + \step^2\nij{i}{t}(\nij{i}{t}+p+1)\right)\norm{\deltawwt{t-1}}^2\\
    =&\left(\left(1-\step\right)^2\nij{i}{t}^2+\step^2\nij{i}{t}(p+1)\right)\norm{\deltawwt{t-1}}^2,\\
    \E_t \norm{\bm{q}_{2i}}^2 =& \step^2\nij{i}{t}(\nij{i}{t}+p+1)\norm{\gammait{i}{t}}^2,\\
    \E_t \norm{\bm{q}_{3i}}^2 =& \step^2 p\nij{i}{t} \sigmaij{i}{t}^2,\\
    \E_t[\bm{q}_{1i}\T\bm{q}_{2i}] =&\left(\step\nij{i}{t}^2-\step^2\nij{i}{t}(\nij{i}{t}+p+1)\right)\deltawwt{t-1}\T\gammait{i}{t}.
\end{aligned}
\end{equation}
Similarly, by \cref{le.XXXX}, for any $i,j\in [m]$ where $i\neq j$, we have
\begin{equation}\label{eq.temp_090202}
\begin{aligned}
    \E [\bm{q}_{1i}\T\bm{q}_{1j}]=&\nij{i}{t}\nij{j}{t}\left(1-\step\right)\left(1-\stepj\right)\norm{\deltawwt{t-1}}^2,\\
    \E[\bm{q}_{1i}\T\bm{q}_{2j}]=&\left(\stepj\nij{i}{t}\nij{j}{t}-\step\stepj\nij{i}{t}\nij{j}{t}\right)\deltawwt{t-1}\T\gammait{j}{t}\\
    =&\nij{i}{t}\nij{j}{t}\stepj\left(1-\step\right)\deltawwt{t-1}\T\gammait{j}{t},\\
    \E[\bm{q}_{2i}\T\bm{q}_{2j}]=& \step\stepj\nij{i}{t}\nij{j}{t}\gammait{i}{t}\T\gammait{j}{t}.
\end{aligned}
\end{equation}
Plugging \cref{eq.temp_090203,eq.temp_090202} into \cref{eq.temp_090204}, we thus have
\begin{equation}\label{eq.temp_090404}
    \begin{aligned}
        &\E_t[\norm{\deltawwt{t}}^2] \\
    = &\frac{\norm{\deltawwt{t-1}}^2}{(\sum_{i\in [m]}\nij{i}{t})^2}\left(\sum_{i\in [m]}\left((1-\step)^2 \nij{i}{t}^2 + \step^2 \nij{i}{t}(p+1)\right) + \sum_{i\in [m]}\sum_{j\in [m]\setminus\{j\}}\nij{i}{t}\nij{j}{t}(1-\step)(1-\stepj)\right)\\
    &+\mySumSquare\step^2 \left(p\nij{i}{t} \sigmaij{i}{t}^2+\nij{i}{t}(\nij{i}{t}+p+1)\norm{\gammait{i}{t}}^2\right)\\
    &+2\mySumSquare\left(\step\nij{i}{t}^2-\step^2\nij{i}{t}(\nij{i}{t}+p+1)\right)\deltawwt{t-1}\T\gammait{i}{t}\\
    &+\mySumSquare\sumNeq\left(2\nij{i}{t}\nij{j}{t}\stepj\left(1-\step\right)\deltawwt{t-1}\T\gammait{j}{t} + \step\stepj\nij{i}{t}\nij{j}{t}\gammait{i}{t}\T\gammait{j}{t}\right).
    \end{aligned}
\end{equation}
Notice that
\begin{align*}
    &\left(\sum_{i\in [m]}\left((1-\step)^2 \nij{i}{t}^2 + \step^2 \nij{i}{t}(p+1)\right) + \sum_{i\in [m]}\sum_{j\in [m]\setminus\{j\}}\nij{i}{t}\nij{j}{t}(1-\step)(1-\stepj)\right) \\
    = &\frac{1}{(\sum_{i\in [m]}\nij{i}{t})^2} \left(\sum_{i\in [m]}\nij{i}{t}(1-\step)^2\right)^2 + \mySumSquare \step^2 \nij{i}{t}(p+1)\\
    =& H_t\ \text{(recalling \cref{eq.def_Ht})},
\end{align*}
and
\begin{align*}
    &\mySumSquare \step^2 \nij{i}{t}(\nij{i}{t} + p + 1)\norm{\gammait{i}{t}}^2 \\
    &+ \mySumSquare\sumNeq \step\stepj\nij{i}{t}\nij{j}{t}\gammait{i}{t}\T\gammait{j}{t}\\
    =&\frac{1}{(\sum_{i\in [m]}\nij{i}{t})^2} \norm{\sum_{i\in [m]}\step \nij{i}{t}\gammait{i}{t}}^2 + \mySumSquare \step^2 \nij{i}{t}(p+1) \norm{\gammait{i}{t}}^2,
\end{align*}
and
\begin{align*}
    &2\mySumSquare\left(\step\nij{i}{t}^2-\step^2\nij{i}{t}(\nij{i}{t}+p+1)\right)\deltawwt{t-1}\T\gammait{i}{t}\\
    &+\mySumSquare\sumNeq\left(2\nij{i}{t}\nij{j}{t}\stepj\left(1-\step\right)\deltawwt{t-1}\T\gammait{j}{t}\right)\\
    =& \frac{2}{(\sum_{i\in [m]}\nij{i}{t})^2}\left(\sum_{i\in [m]}\nij{i}{t}(1-\step)\right)\cdot \left(\sum_{i\in [m]}\nij{i}{t}\step \deltawwt{t-1}\T\gammait{i}{t}\right)\\
    &- \mySumSquareP{2\sum_{i\in [m]}\step^2 \nij{i}{t}(p+1)\deltawwt{t-1}\T \gammait{i}{t}}.
\end{align*}
Further, by \cref{eq.temp_090403} and recalling \cref{eq.def_Gt}, we thus can rewrite \cref{eq.temp_090404} as
\begin{align}\label{eq.temp_090405}
    \E \norm{\deltawwt{t}}^2 = H_t \E \norm{\deltawwt{t-1}}^2 + G_t.
\end{align}

Applying \cref{eq.temp_090405} recursively, we thus have \cref{eq.result_K1}.

\section{Proof of Theorem~\ref{th.multi_K}}\label{app.proof_multi_K}

Define
\begin{align*}
    &\bm{g}_l^{K<\infty} \defeq \F\left(l,\deltatGeneral{0},\seq{t}{\frac{\sum_{i\in [m]}\nij{i}{t}(1-\step)^K}{\sum_{i\in [m]}\nij{i}{t}}},\seq{t}{\frac{\sum_{i\in [m]}\nij{i}{t}\left(1-(1-\step)^K\right)\gammait{i}{t}}{\sum_{i\in [m]}\nij{i}{t}} }\right)\numberthis\label{eq.def_Fl}\\
    &\mathcal{A}_{(i),t}\defeq  (1-\step)^2 + \frac{\step^2(p+1)}{\nn},\numberthis \label{eq.def_Ait}
\end{align*}
\begin{align*}
    &\mathcal{B}_{(i),t,k} \defeq \frac{\step^2 p \sigmaij{i}{t}^2}{\nn}\\
    &+\left(\frac{\step^2}{\nn}(\nn + p + 1) + 2\step \left(1 - \frac{\step}{\nn}(\nn + p + 1)\right)\left(1-(1-\step)^{k-1}\right)\right)\norm{\gammait{i}{t}}^2\\
    &+2\left(\step - \frac{\step^2}{\nn}(\nn+p+1)\right)(1-\step)^{k-1} \gammait{i}{t}\T \bm{g}_{t-1}^{K<\infty},\numberthis \label{eq.def_Bit}
\end{align*}
\begin{align*}
    &\mathcal{J}_t \defeq \mySumSquarePP{\nij{i}{t}^2 \mathcal{A}_{(i),t}^K}+ \mySumSquarePP{\sumNeq \nij{i}{t}\nij{j}{t} (1-\step)^{K}(1-\stepj)^{K}},\numberthis \label{eq.def_Jt}\\
    &\mathcal{Q}_t \defeq   \mySumSquarePP {\nij{i}{t}^2 \sum_{k=1}^K \mathcal{B}_{(i),t,k} \mathcal{A}_{(i),t}^{K-k}}\\
    &+\mySumSquare\sumNeq \nij{i}{t}\nij{j}{t} \left(2(1-\step)^K (1-(1-\stepj)^K)\gammait{j}{t}\T\bm{g}_{t-1}^{K<\infty}\right.\\
    &\left.+(1-(1-\step)^K)(1-(1-\stepj)^K)\gammait{i}{t}\T\gammait{j}{t}\right).\numberthis \label{eq.def_Qt}
\end{align*}

In the following, we use $\E_k$ to denote the expectation with respect to the randomness in the $k$-th batch. 

We have
\begin{align*}
    \deltat{t} =& \vwTrue - \wavgg{t}\\
    =& \vwTrue - \mySum \nij{i}{t}\wijj{i}{t}\\
    =&\mySum \nij{i}{t}(\vwTrue - \wijj{i}{t})\ \text{ (since $\vwTrue = \mySum \nij{i}{t}\vwTrue$)}.
\end{align*}
Thus, we have
\begin{equation}\label{eq.temp_090901}
    \begin{aligned}
        \norm{\deltat{t}}^2 =& \mySumSquare \nij{i}{t}^2 \norm{\vwTrue - \wijj{i}{t}}^2 \\
    &+ \mySumSquare\sumNeq \nij{i}{t}\nij{j}{t}(\vwTrue - \wijj{i}{t})\T (\vwTrue - \wijj{j}{t}).
    \end{aligned}
\end{equation}
By \cref{as.Gaussian}, we know that at round $t$, different agents' data are independent with each other. Thus, we have
\begin{align*}
    \E_t (\vwTrue - \wijj{i}{t})\T (\vwTrue - \wijj{j}{t}) = \E_t (\vwTrue - \wijj{i}{t})\T \E_t (\vwTrue - \wijj{j}{t}).
\end{align*}
Thus, by \cref{eq.temp_090901}, to calculate $\E_t \norm{\deltat{t}}^2$, it remains to calculate $\E_t \norm{\vwTrue - \wijj{i}{t}}^2$ and $\E_t (\vwTrue - \wijj{i}{t})$ for all $i\in [m]$. To that end, we have
\begin{align*}
    \wk{k} = \left(\iMatrix{p}-\frac{\step}{\nn}\Xk\Xk\T\right)\wk{k-1} + \frac{\step}{\nn}\Xk(\Xk\T \wijTrue{i}{t}+\ek).
\end{align*}
We thus have
\begin{equation}\label{eq.temp_090903}
\begin{aligned}
\vwTrue - \wk{k} =&  \left(\iMatrix{p}-\frac{\step}{\nn}\Xk\Xk\T\right)(\vwTrue - \wk{k-1}) + \frac{\step}{\nn}\Xk\Xk\T(\vwTrue - \wijTrue{i}{t})\\
&+ \frac{\step}{\nn}\Xk \ek.
\end{aligned}
\end{equation}
By \cref{le.XXXX} and recalling \cref{eq.define_gammait}, we thus have
\begin{align}\label{eq.temp_090902}
    \E_{k}(\vwTrue - \wk{k}) = (1 -\step)(\vwTrue - \wk{k-1})+\step \gammait{i}{t}.
\end{align}
Applying \cref{eq.temp_090902} recursively and recalling that $\wk{0}=\deltat{t-1}$, we thus have
\begin{align}
    \E_{1,2,\cdots,k} (\vwTrue - \wk{k}) = (1-\step)^k  \deltat{t-1} + \left(1-(1-\step)^k\right) \gammait{i}{t}.\label{eq.temp_090904}
\end{align}
By letting $k=K$ in \cref{eq.temp_090904} and  $\wk{K}=\wijj{i}{t}$, we thus have
\begin{align}
    \E_{t} (\vwTrue - \wijj{i}{t}) = (1-\step)^K \deltat{t-1} + \left(1-(1-\step)^K\right) \gammait{i}{t}.\label{eq.temp_091002}
\end{align}
Plugging \cref{eq.temp_091002} into \cref{eq.temp_090901}, we thus have
\begin{align*}
    \E_t\norm{\deltat{t}}^2=& \mySumSquare \nij{i}{t}^2 \E_t\norm{\vwTrue - \wijj{i}{t}}^2 \\
    &+ \mySumSquare\sumNeq \nij{i}{t}\nij{j}{t}\E_t(\vwTrue - \wijj{i}{t})\T \E_t(\vwTrue - \wijj{j}{t})\numberthis \label{eq.temp_091301}\\
    =&\mySumSquare \nij{i}{t}^2 \E_t\norm{\vwTrue - \wijj{i}{t}}^2 \\
    &+ \mySumSquare\sumNeq \nij{i}{t}\nij{j}{t} \left((1-\step)^{K}(1-\stepj)^{K}\norm{\deltat{t-1}}^2\right.\\
    &\left.+(1-\step)^K (1-(1-\stepj)^K)\gammait{j}{t}\T\deltat{t-1}+(1-\stepj)^K (1-(1-\step)^K)\gammait{i}{t}\T\deltat{t-1}\right.\\
    &\left.+(1-(1-\step)^K)(1-(1-\stepj)^K)\gammait{i}{t}\T\gammait{j}{t}\right)\\
    =&\mySumSquare \nij{i}{t}^2 \E_t\norm{\vwTrue - \wijj{i}{t}}^2 \\
    &+ \mySumSquare\sumNeq \nij{i}{t}\nij{j}{t} \left((1-\step)^{K}(1-\stepj)^{K}\norm{\deltat{t-1}}^2\right.\\
    &\left.+2(1-\step)^K (1-(1-\stepj)^K)\gammait{j}{t}\T\deltat{t-1}\right.\\
    &\left.+(1-(1-\step)^K)(1-(1-\stepj)^K)\gammait{i}{t}\T\gammait{j}{t}\right).\numberthis \label{eq.temp_091302}
\end{align*}
Notice that in \cref{eq.temp_091301} we use $\E_t(\vwTrue - \wijj{i}{t})\T (\vwTrue - \wijj{j}{t})=\E_t(\vwTrue - \wijj{i}{t})\T \E_t(\vwTrue - \wijj{j}{t})$ for $i\neq j$, since $\wijj{i}{t}$ and $\wijj{j}{t}$ are independent with respect to the randomness during the local updates at round $t$.

By \cref{eq.FedAvg,eq.temp_091002}, we thus have
\begin{align}
    \E \deltat{t} = \frac{\sum_{i\in [m]}\nij{i}{t}(1-\step)^K}{\sum_{i\in [m]}\nij{i}{t}} \E \deltat{t-1} + \frac{\sum_{i\in [m]}\nij{i}{t}\left(1-(1-\step)^K\right)\gammait{i}{t}}{\sum_{i\in [m]}\nij{i}{t}}.\label{eq.temp_091003}
\end{align}
Applying \cref{eq.temp_091003} recursively and recalling \cref{eq.def_F_formula}, we thus have
\begin{align}
    \E [\deltat{l}] = \bm{g}_l^{K<\infty},\label{eq.temp_091101}
\end{align}
where $\bm{g}_l^{K<\infty}$ is defined in \cref{eq.def_Fl}.


By \cref{eq.temp_090903}, we have
\begin{align*}
    &\E_k \norm{\vwTrue - \wk{k}}^2 \\
    =& (\vwTrue - \wk{k-1})\T \left(\iMatrix{p} - 2\frac{\step}{\nn}\Xk\Xk\T + \frac{\step^2}{\nn^2}\Xk\Xk\T \Xk\Xk\T\right)(\vwTrue - \wk{k-1}) \\
    &+ \gammait{i}{t}\T \frac{\step^2}{\nn^2} \Xk\Xk\T\Xk\Xk \gammait{i}{t} + \ek\T \frac{\step^2}{\nn^2}\Xk\T \Xk \ek\\
    &+ 2 \frac{\step}{\nn} \gammait{i}{t}\T \Xk\Xk\T \left(\iMatrix{p}-\frac{\step}{\nn}\Xk\Xk\T\right) (\vwTrue - \wk{k-1})\\
    =& \left(1 - 2 \step + \frac{\step^2}{\nn}(\nn + p + 1)\right)\norm{\vwTrue - \wk{k-1}}^2 + \frac{\step^2}{\nn}(\nn + p + 1) \norm{\gammait{i}{t}}^2\\
    & + \step^2 \frac{p}{\nn} \sigmaij{i}{t}^2 + 2 \step \left(1 - \frac{\step}{\nn}(\nn + p + 1)\right)\gammait{i}{t}\T (\vwTrue - \wk{k-1})\ \text{ (by \cref{le.XXXX})}. \numberthis \label{eq.temp_090905}
\end{align*}
Plugging \cref{eq.temp_090904} into \cref{eq.temp_090905}, we have
\begin{align*}
    &\E_{1,2,\cdots,k} \norm{\vwTrue - \wk{k}}^2\\
    =& \left((1-\step)^2 + \frac{\step^2(p+1)}{\nn}\right) \E_{1,2,\cdots,k-1} \norm{\vwTrue - \wk{k-1}}^2 +\frac{\step^2}{\nn}(\nn + p + 1) \norm{\gammait{i}{t}}^2\\
    & + \step^2 \frac{p}{\nn} \sigmaij{i}{t}^2 + 2 \step \left(1 - \frac{\step}{\nn}(\nn + p + 1)\right) (1-\step)^{k-1} \gammait{i}{t}\T\deltat{t-1} \\
    &+  2\step \left(1 - \frac{\step}{\nn}(\nn + p + 1)\right)\left(1-(1-\step)^{k-1}\right) \norm{\gammait{i}{t}}^2\\
    =& \mathcal{A}_{(i),t}\E \norm{\vwTrue - \wk{k-1}}^2 + \mathcal{B}_{(i),t,k}',\numberthis \label{eq.temp_091201}
\end{align*}
where $\mathcal{A}_{(i),t}$ is defined in \cref{eq.def_Ait} and
\begin{align*}
    &\mathcal{B}_{(i),t,k}' \\
    \defeq &\frac{\step^2 p \sigmaij{i}{t}^2}{\nn}\\
    &+\left(\frac{\step^2}{\nn}(\nn + p + 1) + 2\step \left(1 - \frac{\step}{\nn}(\nn + p + 1)\right)\left(1-(1-\step)^{k-1}\right)\right)\norm{\gammait{i}{t}}^2\\
    &+2\left(\step - \frac{\step^2}{\nn}(\nn+p+1)\right)(1-\step)^{k-1} \gammait{i}{t}\T \deltat{t-1}.
\end{align*}
We also define $\mathcal{B}_{(i),t,k}$ by replacing $\deltat{t-1}$ in $\mathcal{B}_{(i),t,k}'$ with $\F_{t-1}$, i.e., \cref{eq.def_Bit}.

Applying \cref{eq.temp_091201} recursively over $k=1,2,\cdots,K$, we thus have
\begin{align}
    \E_t \norm{\vwTrue - \wijj{i}{t}}^2 =& \mathcal{A}_{(i),t}^K \norm{\deltat{t-1}}^2+ \sum_{k=1}^K \mathcal{B}_{(i),t,k} \mathcal{A}_{(i),t}^{K-k}.\label{eq.temp_091202}
\end{align}
Plugging \cref{eq.temp_091202,eq.temp_091101} into \cref{eq.temp_091302}, we thus have
\begin{align}\label{eq.temp_091203}
    \E \norm{\deltat{t}}^2 = & \mathcal{J}_t \E\norm{\deltat{t-1}}^2 +\mathcal{Q}_t,
\end{align}
where $\mathcal{J}_t$ is defined in \cref{eq.def_Jt} and $\mathcal{Q}_t$ is defined in \cref{eq.def_Qt}.

Applying \cref{eq.temp_091203} recursively, we thus have \cref{eq.multi_K}.

\section{Proof of Theorem~\ref{th.main}}\label{app.proof_main}

\begin{proof}
In the overparameterized situation, after each agent trains to converge, we have
\begin{align}\label{eq.temp_082301}
    \wij{i}{t}=&\XX{i}{t}\left(\XX{i}{t}\T\XX{i}{t}\right)^{-1} \left(\yy{i}{t}-\XX{i}{t}\T\wavg{t-1}\right)+\wavg{t-1}.
\end{align}
For any $i\in [m]$, we define $\Pij{i}{t}\in \mathds{R}^{p\times p}$ as
\begin{align}\label{eq.def_Pij}
    \Pij{i}{t}\defeq \XX{i}{t}\left(\XX{i}{t}\T\XX{i}{t}\right)^{-1} \XX{i}{t}\T.
\end{align}
(We know $\Pij{i}{t}$ is an orthogonal projection since $\Pij{i}{t}\Pij{i}{t}=\Pij{i}{t}$ and $\Pij{i}{t}\T = \Pij{i}{t}$.)
By \cref{eq.temp_082301,eq.linear_model,eq.def_Pij}, we thus have
\begin{align}
    \wij{i}{t} = \Pij{i}{t}\wijTrue{i}{t} + (\iMatrix{p} - \Pij{i}{t}) \wavg{t-1} + \XX{i}{t}\left(\XX{i}{t}\T\XX{i}{t}\right)^{-1} \ee{i}{t}.\label{eq.temp_082401}
\end{align}



We thus have
\begin{align*}
    &\deltawt{t} \\
    =&\vwTrue-\wavg{t}\ \text{ (by \cref{eq.def_delta})}\\
    =&\vwTrue - \mySum\nij{i}{t}\left(\Pij{i}{t}\wijTrue{i}{t} + (\iMatrix{p} - \Pij{i}{t}) \wavg{t-1} + \XX{i}{t}\left(\XX{i}{t}\T\XX{i}{t}\right)^{-1} \ee{i}{t}\right)\\
    &\text{ (by \cref{eq.temp_082401,eq.FedAvg})}\\
    =&\mySum\nij{i}{t}\left(\Pij{i}{t}(\vwTrue - \wijTrue{i}{t}) + (\iMatrix{p} - \Pij{i}{t}) (\vwTrue-\wavg{t-1}) - \XX{i}{t}\left(\XX{i}{t}\T\XX{i}{t}\right)^{-1} \ee{i}{t}\right)\\
    &\text{ (since $\vwTrue=\frac{\sum_{i\in [m]}\nij{i}{t}(\Pij{i}{t}+\iMatrix{p}-\Pij{i}{t})\vwTrue}{\sum_{i\in [m]}\nij{i}{t}}$)}\\
    =&\mySum\nij{i}{t}\left(\Pij{i}{t} \gammait{i}{t} + (\iMatrix{p}-\Pij{i}{t})\deltawt{t-1}- \XX{i}{t}\left(\XX{i}{t}\T\XX{i}{t}\right)^{-1} \ee{i}{t}\right)\\
    &\text{ (by \cref{eq.def_delta,eq.define_gammait})}.\numberthis \label{eq.delta_t_expression}
\end{align*}

For any $i,j\in [m]$, because $\ee{j}{t}$ is independent of $\deltawt{t-1}$ and $\XX{i}{t}$, and also because $\ee{j}{t}$ has zero mean (by \cref{as.Gaussian}), we have
\begin{align}
    &\E \left[\left(\Pij{i}{t}\gammait{i}{t}\right)\T \XX{j}{t}\left(\XX{j}{t}\T\XX{j}{t}\right)^{-1} \ee{j}{t}\right]\nonumber\\
    = &\E\left[((\iMatrix{p}-\Pij{i}{t})\deltawt{t-1})\T\XX{i}{t}\left(\XX{i}{t}\T\XX{i}{t}\right)^{-1} \ee{i}{t} \right] \nonumber\\
    =& 0,\label{eq.temp_081901}
\end{align}
and
\begin{align}
    &\E\left[\XX{i}{t}\left(\XX{i}{t}\T\XX{i}{t}\right)^{-1} \ee{i}{t} \right] = \bm{0}.\label{eq.temp_082902}
\end{align}
Since $\Pij{i}{t}(\iMatrix{p}-\Pij{i}{t})=\bm{0}$, we have
\begin{align}\label{eq.temp_092001}
    \left(\Pij{i}{t}\gammait{i}{t}\right)\T(\iMatrix{p}-\Pij{i}{t})\deltawt{t-1}=0.
\end{align}
Thus, by \cref{eq.delta_t_expression,eq.temp_081901,eq.temp_092001}, we have
\begin{align*}
    &\E_t \norm{\deltawt{t}}^2 \\
    =& \mySumSquareP{\sum_{i\in [m]} \nij{i}{t}^2\left(  \E_t\norm{(\iMatrix{p}-\Pij{i}{t})\deltawt{t-1}}^2 + \E_t\norm{\Pij{i}{t} \gammait{i}{t}}^2+\E_t\norm{\XX{i}{t}\left(\XX{i}{t}\T\XX{i}{t}\right)^{-1} \ee{i}{t}}^2\right) }\\
    &+ \mySumSquareP{1}\sum_{i\in [m]}\sumNeq \nij{i}{t}\nij{j}{t}\left(\gammait{j}{t}\T\Pij{j}{t}\Pij{i}{t}\gammait{i}{t}\right.\\
    &\left.+\deltawt{t-1}\T (\iMatrix{p}-\Pij{j}{t})(\iMatrix{p}-\Pij{i}{t})\deltawt{t-1}+2\gammait{j}{t}\T\Pij{j}{t}(\iMatrix{p}-\Pij{i}{t})\deltawt{t-1}\right).\numberthis \label{eq.temp_081902}
\end{align*}

For any $i\in [m]$, we have
\begin{align}
    &\E_t\norm{\Pij{i}{t}\gammait{i}{t}}^2 = \frac{\nij{i}{t}}{p}\norm{\gammait{i}{t}}^2\ \text{ (by \cref{le.bias})},\label{eq.temp_081903}\\
    &\E_t\norm{(\iMatrix{p}-\Pij{i}{t})\deltawt{t-1}}^2 = \left(1-\frac{\nij{i}{t}}{p}\right)\norm{\deltawt{t-1}}^2\ \text{ (by \cref{le.bias})},\label{eq.temp_092002}\\
    &\E_t\norm{\XX{i}{t}\left(\XX{i}{t}\T\XX{i}{t}\right)^{-1} \ee{i}{t}}^2 = \frac{\nij{i}{t}\sigma_i^2}{p-\nij{i}{t}-1}\ \text{ (by \cref{le.IW})}.\label{eq.temp_081904}
\end{align}
For any $i,j\in [m]$ where $i\neq j$, we have
\begin{align*}
    &\E_t \left[\deltawt{t-1}\T (\iMatrix{p}-\Pij{j}{t})(\iMatrix{p}-\Pij{i}{t})\deltawt{t-1}\right]\\
    =& \E_t \left[(\iMatrix{p}-\Pij{i}{t})\deltawt{t-1}\right]\T \E_t \left[(\iMatrix{p}-\Pij{j}{t})\deltawt{t-1}\right]\\
    &\text{ (since $\Pij{i}{t}$ and $\Pij{j}{t}$ are independent when $i\neq j$)}\\
    =& \left(1 - \frac{\nij{i}{t}}{p}\right)\left(1 - \frac{\nij{j}{t}}{p}\right)\norm{\deltawt{t-1}}^2 \text{ (by \cref{le.cross_term_exact_value})}.\numberthis \label{eq.temp_092101}
\end{align*}
Similarly, for $i\neq j$, we have
\begin{align*}
    \E_t \left[\gammait{j}{t}\T \Pij{j}{t}\Pij{i}{t}\gammait{i}{t}\right] = \frac{\nij{i}{t}\nij{j}{t}}{p^2} \gammait{j}{t}\T \gammait{i}{t}\ \text{ (by \cref{le.cross_term_exact_value})},\numberthis \label{eq.temp_092102}
\end{align*}
and
\begin{align*}
    \E_t \left[\gammait{j}{t}\T\Pij{j}{t}(\iMatrix{p}-\Pij{i}{t})\deltawt{t-1}\right] =\frac{\nij{j}{t}}{p}\left(1 - \frac{\nij{i}{t}}{p}\right)\gammait{j}{t}\T \deltawt{t-1} \ \text{ (by \cref{le.cross_term_exact_value})}.\numberthis \label{eq.temp_092103}
\end{align*}

Plugging \cref{eq.temp_081904,eq.temp_081903,eq.temp_092002,eq.temp_092101,eq.temp_092102,eq.temp_092103} into \cref{eq.temp_081902}, we thus have

\begin{align*}
    &\E_t \norm{\deltawt{t}}^2 \\
    =& \mySumSquareP{\sum_{i\in [m]} \nij{i}{t}^2\left(  \left(1-\frac{\nij{i}{t}}{p}\right)\norm{\deltawt{t-1}}^2 + \frac{\nij{i}{t}}{p}\norm{\gammait{i}{t}}^2+\frac{\nij{i}{t}\sigmaij{i}{t}^2}{p-\nij{i}{t}-1}\right) }\\
    &+ \mySumSquareP{1}\sum_{i\in [m]}\sumNeq \nij{i}{t}\nij{j}{t}\left(\frac{\nij{i}{t}\nij{j}{t}}{p^2} \gammait{j}{t}\T \gammait{i}{t}\right.\\
    &\left.+\left(1 - \frac{\nij{i}{t}}{p}\right)\left(1 - \frac{\nij{j}{t}}{p}\right)\norm{\deltawt{t-1}}^2+2\frac{\nij{j}{t}}{p}\left(1 - \frac{\nij{i}{t}}{p}\right)\gammait{j}{t}\T \deltawt{t-1}\right).\numberthis \label{eq.temp_082803}
\end{align*}


By \cref{eq.delta_t_expression}, we also have
\begin{align*}
    \E_t [\deltawt{t}] = \mySum \nij{i}{t} \left(\frac{\nij{i}{t}}{p}\gammait{i}{t} + \left(1 - \frac{\nij{i}{t}}{p}\right)\deltawt{t-1}\right).\numberthis \label{eq.temp_092105}
\end{align*}
Applying \cref{eq.temp_092105} recursively, we thus have
\begin{align}
    \E [\deltawt{l}] =\bm{g}_l^{K=\infty},\label{eq.E_Delta}
\end{align}
where $\bm{g}_l^{K=\infty}$ is defined in \cref{eq.def_F_inf}.

By \cref{eq.temp_082803,eq.E_Delta}, we thus have
\begin{align}
    \E \norm{\deltawt{t}}^2 = C_t \cdot  \E\norm{\deltawt{t-1}}^2 + D_t,\label{eq.temp_083101}
\end{align}
where $C_t$ denotes the coefficient of $\norm{\deltawt{t-1}}^2$ and $D_t$ denotes the remaining parts. The specific expressions of $C_t$ and $D_t$ are in \cref{eq.def_Ct,eq.def_Dt}.
Applying \cref{eq.temp_083101} recursively, we thus have \cref{eq.inf_K_main}.

\noindent\textbf{Underparameterized situation}

In the underparameterized situation, the convergence point of local steps in each round corresponds to the solution that minimizes the training loss, i.e.,
\begin{align*}
    \wij{i}{t} =& (\XX{i}{t}\XX{i}{t}\T)^{-1}\XX{i}{t}\yy{i}{t}\\
    =&(\XX{i}{t}\XX{i}{t}\T)^{-1}\XX{i}{t}(\XX{i}{t}\T \wijTrue{i}{t} + \ee{i}{t})\ \text{ (by \cref{eq.linear_model})}\\
    =&\wijTrue{i}{t} + (\XX{i}{t}\XX{i}{t}\T)^{-1}\XX{i}{t}\ee{i}{t}.
\end{align*}
Also recalling \cref{eq.def_delta,eq.define_gammait}, we thus have
\begin{align}
    \deltawt{t} = \mySum{\nij{i}{t}(\gammait{i}{t}-(\XX{i}{t}\XX{i}{t}\T)^{-1}\XX{i}{t}\ee{i}{t})}.\label{eq.temp_092108}
\end{align}
For any $i,j\in [m]$, because $\ee{j}{t}$ is independent of $\XX{i}{t}$ and $\ee{i}{t}$, and also because $\ee{j}{t}$ has zero mean (by \cref{as.Gaussian}), we have
\begin{align*}
    &\E\left[\gammait{j}{t}\T (\XX{i}{t}\XX{i}{t}\T)^{-1}\XX{i}{t}\ee{i}{t}\right] = 0\ \text{ for all $i,j\in [m]$},\\
    &\E\left[\left(\XX{j}{t}\XX{j}{t}\T)^{-1}\XX{j}{t}\ee{j}{t}\right)\T(\XX{i}{t}\XX{i}{t}\T)^{-1}\XX{i}{t}\ee{i}{t}\right]=0\ \text{ for all $i\neq j$}.
\end{align*}
Thus, by \cref{eq.temp_092108}, we have
\begin{align*}
    \E\norm{\deltawt{t}}^2 =& \mySumSquare \nij{i}{t}^2 \left(\norm{\gammait{i}{t}}^2 + \E\norm{(\XX{i}{t}\XX{i}{t}\T)^{-1}\XX{i}{t}\ee{i}{t}}^2\right)\\
    &+ \mySumSquare\sumNeq \nij{i}{t}\nij{j}{t}\gammait{i}{t}\T \gammait{j}{t}\\
    =& \norm{\mySumP{\nij{i}{t}\gammait{i}{t}}}^2 + \mySumSquarePP{ \frac{\nij{i}{t}^2 p\sigmaij{i}{t}^2}{\nij{i}{t}-p-1}} \text{ (by \cref{eq.temp_010303} in \cref{le.IW})}.
\end{align*}
We thus have proven \cref{eq.underparameterized}.



The result of this theorem thus follows.
\end{proof}

\section{A Table for Notations}\label{app.notation_table}
We provide a table of some important notations used in this paper.

\begin{table}[h!]
\centering
\begin{tabular}{||c | c | c||} 
 \hline
 {\bf symbol} & {\bf meaning}\\ 
 \hline\hline
 $\nij{i}{t}$ & number of training samples \\
 \hline
 $\nn$ & batch size\\
 \hline
 $p$ & number of parameters\\
 \hline
 $\sigmaij{i}{t}$ & noise level \\
 \hline
 $\XX{i}{t}$ & matrix for input of training samples\\
 \hline
 $\yy{i}{t}$ & vector for output of training samples\\
 \hline
 $\ee{i}{t}$ & vector for noise of training samples\\
 \hline
 $\wInit$ & the pre-trained parameters (initialization)\\
 \hline
 $\vwTrue$ & the learning target\\
 \hline
 $\wijTrue{i}{t}$ & the ground-truth of agent $i$ at round $t$\\
 \hline
 $\wwij{i}{t},\wijj{i}{t},\wij{i}{t}$ & the local learning result of agent $i$ at round $t$\\
 \hline
 $\wk{k}$ & learning result after $k$-th batch (for $K<\infty$ case)\\
 \hline
 $\wwavg{t},\wavgg{t},\wavg{t}$ &  the FedAvg result at round $t$\\
 \hline
 $\norm{\deltawwt{t}}^2,\norm{\deltat{t}}^2,\norm{\deltawt{t}}^2$ & model error\\
 \hline
 $\norm{\deltatGeneral{0}}^2$ & initial (pre-trained) model error \\
 \hline
 $\step$ & learning rate (step size)\\
 \hline
 $\gammait{i}{t}$ & measurement of heterogeneity\\
 \hline
\end{tabular}
\caption{Table for some notations.}
\label{table.notations}
\end{table}

\end{document}